\DeclareMathOperator*{\Ex}{\vphantom{p}\mathbb{E}}
\let\inf\undef
\DeclareMathOperator*{\inf}{\vphantom{p}inf}
\let\sup\undef
\DeclareMathOperator*{\sup}{\vphantom{p}sup}
\begin{document}

\title{Online Learning via Sequential Complexities}

\author{\name Alexander Rakhlin \email rakhlin@wharton.upenn.edu \\
       \addr Department of Statistics\\
       University of Pennsylvania
       \AND
       \name Karthik Sridharan \email skarthik@wharton.upenn.edu \\
       \addr Department of Statistics\\
       University of Pennsylvania
       \AND
       \name Ambuj Tewari \email tewaria@umich.edu \\
       \addr Department of Statistics, and\\
       Department of Electrical Engineering and Computer Science \\
       University of Michigan
}

\editor{Mehryar Mohri}

\maketitle

\begin{abstract}
We consider the problem of sequential prediction and provide tools to study the minimax value of the associated game. Classical statistical
learning theory provides several useful complexity measures to study learning with i.i.d.\ data. Our proposed sequential complexities can be seen as extensions of these measures to the sequential setting. The developed theory is shown to yield precise learning guarantees for the problem of sequential prediction. In particular, we show necessary and sufficient conditions for online learnability in the setting of supervised learning. Several examples show the utility of our framework: we can establish learnability without having to exhibit an explicit online learning algorithm.
\end{abstract}

\newcommand \grey[1]{{\color[rgb]{0.80,0.80,0.80}#1}}
\newcommand \red[1]{{\color[rgb]{0.80,0.00,0.00}\textbf{#1}}}
\newcommand{\multiminimax}[1]{\ensuremath{\left\llangle #1\right\rrangle}}

%%%%%%%%%% NOTATION %%%%%%%%%%

\newcommand{\Treeproc}{\ensuremath{\mathbb T}}
\newcommand{\Mproc}{\ensuremath{\mathbb M}}
\newcommand{\Emproc}{\ensuremath{\mathbb G}}
\newcommand{\Radproc}{\ensuremath{ \mathbb S}}
%\newcommand{\Radproc}{\ensuremath{ \Upsilon}}

% Shortcuts
\newcommand{\mbb}[1]{\mathbb{#1}}
\newcommand{\mbf}[1]{\mathbf{#1}}
\newcommand{\mc}[1]{\mathcal{#1}}
\newcommand{\mrm}[1]{\mathrm{#1}}
\newcommand{\trm}[1]{\textrm{#1}}

% General math
%\newcommand{\v}{\mbb{v}}
\newcommand{\norm}[1]{\left\|#1\right\|}
\newcommand{\ip}[2]{\left<#1,#2\right>}
\newcommand{\sign}{\mrm{sign}}
\newcommand{\argmin}[1]{\underset{#1}{\mrm{argmin}} \ }
\newcommand{\argmax}[1]{\underset{#1}{\mrm{argmax}} \ }
\newcommand{\reals}{\mathbb{R}}
\newcommand{\E}[1]{\mathbb{E}\left[ #1 \right]} % brackets
\newcommand{\Ebr}[1]{\mathbb{E}\left\{ #1 \right\}} % curly braces
\newcommand{\En}{\mathbb{E}}  % no brackets or braces
\newcommand{\Es}[2]{\mathbb{E}_{#1}\left[ #2 \right]} % subscript + brackets
\newcommand{\Ps}[2]{\mathbb{P}_{#1}\left[ #2 \right]}
\newcommand{\Prob}{\mathbb{P}}
\newcommand{\conv}{\operatorname{conv}}
\newcommand{\inner}[1]{\left\langle #1 \right\rangle}
\newcommand{\lv}{\left\|}
\newcommand{\rv}{\right\|}
\newcommand{\Phifunc}[1]{\Phi\left(#1\right)}
\newcommand{\phifunc}[1]{\phi\left(#1\right)}
\newcommand{\ind}[1]{{\bf 1}\left\{#1\right\}}
\newcommand{\tr}{\ensuremath{{\scriptscriptstyle\mathsf{T}}}}
\newcommand{\eqdist}{\stackrel{\text{d}}{=}}
\newcommand{\alphT}{\widehat{\alpha}(T)}
\newcommand{\PD}{\mathcal P}
\newcommand{\QD}{\mathcal Q}
\newcommand{\jp}{\ensuremath{\mathbf{p}}}
\newcommand{\loss}{\ensuremath{\mathbf{\ell}}}

% Bold symbols
\newcommand\s{\mathbf{s}}
\newcommand\w{\mathbf{w}}
\newcommand\x{\mathbf{x}}
\newcommand\y{\mathbf{y}}
\newcommand\z{\mathbf{z}}

\def\deq{\triangleq}

\renewcommand\v{\mathbf{v}}

% Calligraphic symbols
\newcommand\cA{\mathcal{A}}
\newcommand\cB{\mathcal{B}}
\newcommand\cC{\mathcal{C}}
\newcommand\cD{\mathcal{D}}
\newcommand\X{\mathcal{X}}
\newcommand\Y{\mathcal{Y}}
\newcommand\Z{\mathcal{Z}}
\newcommand\F{\mathcal{F}}
\newcommand\G{\mathcal{G}}
\newcommand\cH{\mathcal{H}}
\newcommand\N{\mathcal{N}}
\newcommand\M{\mathcal{M}}
\newcommand\W{\mathcal{W}}
\newcommand\Nhat{\mathcal{\widehat{N}}}

% Notations
\newcommand\ldim{\mathrm{Ldim}}
\newcommand\fat{\mathrm{fat}}
\newcommand\Img{\mbox{Img}}
\newcommand\sparam{\sigma} % Strong convexity constant
\newcommand\Psimax{\ensuremath{\Psi_{\mathrm{max}}}}

% Calligraphic notations
\newcommand\Rad{\mathfrak{R}}
\newcommand\Val{\mathcal{V}}
\newcommand\Valdet{\mathcal{V}^{\mathrm{det}}}
\newcommand\Dudley{\mathfrak{D}}
\newcommand\Reg{\mbf{R}}
\newcommand\D{\mbf{D}}
\renewcommand\P{\mbf{P}}

% For the examples section
\newcommand\Zcvx{\Z_\mathrm{cvx}}
\newcommand\Zlin{\Z_\mathrm{lin}}

\section{Introduction}

This paper is concerned with sequential prediction problems where no probabilistic assumptions are made regarding the data generating mechanism.
Our viewpoint is expressed well by the following quotation from \cite{CovShe77}:
\begin{quotation}
``We are interested in sequential prediction procedures that exploit any apparent order in the sequence. We do not assume the existence of any
underlying distributions, but assume that the sequence is an outcome of a game against a malevolent intelligent nature.''
\end{quotation}
We will, in fact, take the game theoretic viewpoint seriously. All our investigations will proceed by analyzing the minimax value of a repeated game
between a \emph{player} or \emph{learner} and a ``malevolent intelligent nature", or the \emph{adversary}.

Even though we have the setting of prediction problems in mind, it will be useful to develop the theory in a somewhat abstract setting. Towards this end,
fix the sets $\F$ and $\Z$, as well as a loss function $\loss:\F\times\Z\to\reals$, and consider the following $T$-round repeated two-player game, which we term the \emph{online learning} or \emph{sequential prediction} model. On round $t \in \{1,\ldots,T\}$, the learner chooses $f_t \in \F$, 
the adversary picks $z_t \in \Z$, and the learner suffers loss $\loss(f_t,z_t)$. At the end of $T$ rounds we define \emph{regret}
$$
\Reg(f_{1:T},z_{1:T}) ~\deq~ \sum_{t=1}^T \loss(f_t,z_t) - \inf_{f \in \F} \sum_{t=1}^T \loss(f,z_t)
$$
as the difference between the cumulative loss of the player and the cumulative loss of the best fixed decision. For the given pair $(\F, \Z)$, the problem is said to be {\em online learnable} if there exists an algorithm for the learner such that regret grows sublinearly in the time horizon $T$, no matter what strategy the adversary employs.

The origin of the online learning (or sequential prediction) model can be traced back to the work of \cite{Robbins1951} on compound statistical decision problems.
Some of the earliest sequential prediction algorithms were proposed by \cite{Blackwell1956Analog,Blackwell1956Controlled} and \cite{Hannan57}. Blackwell's method was based on his celebrated approachability theorem whereas Hannan's was based on minimizing a randomly perturbed sum of previous losses. Hannan's ideas were to later
resurface in the influential Follow-the-Perturbed-Leader family \citep{Kalai2005} of online learning algorithms. The seminal ideas in the work of Robbins, Blackwell and Hannan led to further developments in many different fields. \cite{Cover1967}, \cite{Davisson1973}, \cite{Ziv1977}, \cite{Rissanen1984}, \cite{Feder1992}, and others laid the foundation of universal coding, compression and prediction in the Information Theory literature. Within Computer Science, \cite{Littlestone1994}, \cite{Cesa-Bianchi1997}, \cite{Vovk1998}, and others studied the online learning model and the prediction with expert advice framework. The connections between regret minimization and convergence to equilibria was studied in Economics by \cite{Foster1997}, \cite{Hart2000} and others.

We have no doubt left out many interesting works above. But even our partial list will convince the reader that research in online learning and sequential prediction has benefited from contributions by researchers from a variety of fields including Computer Science, Economics, Information Theory, and Statistics. For an excellent synthesis and presentation of results from these different fields we refer the reader to the book by \citet{PLG}.
Many of the ideas in the field are \emph{constructive}, resulting in beautiful algorithms, or
algorithmic techniques, associated with names such as Follow-the-Regularized-Leader, Follow-the-Perturbed-Leader, Weighted Majority, Hedge, and Online Gradient Descent. However, analyzing specific algorithms has obvious disadvantages. The algorithm may not be ``optimal" for the task at hand. Even if it is optimal, one cannot prove that fact unless one develops tools for analyzing the inherent \emph{complexity} of the online learning problem.

Our goal is precisely to provide such tools. We will begin by defining the minimax value of the game underlying the abstract online learning model. Then we will develop tools for controlling the minimax value resulting in a theory that parallels statistical learning theory. In particular, we develop analogues
of combinatorial dimensions, covering numbers, and Rademacher complexities. We will also provide results relating these complexities.

Note that our approach is \emph{non-constructive}: controlling the sequential complexities mentioned above will only guarantee the \emph{existence} of a good online learning algorithm but will not explicitly create one. However, it turns out that that the minimax point of view can indeed lead to constructive algorithms as shown by \cite{Rakhlin2012Relax}.

% Learning parts: complexity, rademacher, supervised, examples

\section{Minimax Value and Online Learnability}
\label{sec:value}

To proceed further in our analysis of the minimax value of the repeated game between the learner and the adversary, we need to make a few technical assumptions. We assume that $\F$ is a subset of a separable metric space. Let $\QD$ be the set of probability measures on $\F$ and assume that $\QD$ is weakly compact. In order to allow randomized prediction, we allow the learner to choose a distribution $q_t\in\QD$ on every round. The minimax value of the game is then defined as
\begin{align}  
	\label{eq:def_val_game}
	\Val_T(\F, \Z)  ~\deq~ \inf_{q_1\in \QD}\sup_{z_1\in\Z} \Ex_{f_1 \sim q_1} \cdots  \inf_{q_T\in \QD}\sup_{z_T\in\Z} \Ex_{f_T \sim q_T} \left[ \sum_{t=1}^T \loss(f_t,z_t) - \inf_{f\in \F}\sum_{t=1}^T \loss(f,z_t)\right] \ .
\end{align}
Henceforth, the notation $\Ex_{f\sim q}$ stands for the expectation operator integrating out the random variable $f$ with distribution $q$. We consider here the {\em adaptive} adversary who gets to choose each $z_t$ based on the history of moves $f_{1:t-1}$ and $z_{1:t-1}$. 

The first key step in the study of the value of the game is to appeal to the minimax theorem and exchange the pairs of infima and suprema in \eqref{eq:def_val_game}. This dual formulation is easier to analyze because the choice of the player comes \emph{after} the choice of the mixed strategy of the adversary. We remark that the minimax theorem holds under a very general assumption of weak compactness of $\QD$ and lower semi-continuity of the loss function.\footnote{
We refer to Appendix~\ref{sec:minimax_appendix} for a precise statement of the minimax theorem, as well as sufficient conditions.
} Under these conditions, we can appeal to Theorem~\ref{thm:minimax} stated below, which is adapted for our needs from the work of \citet{AbeAgaBarRak09}. 

\begin{theorem}\label{thm:minimax}
	Let $\F$ and $\Z$ be the sets of moves for the two players, satisfying the necessary conditions for the minimax theorem to hold.  Denote by $\mc{Q}$ and $\mc{P}$ the sets of probability measures (mixed strategies)  on $\F$ and $\Z$, respectively. Then
\begin{align}
	\Val_T(\F, \Z) 
	%&=\inf_{q_1\in\mc{Q}}\sup_{z_1\in\Z}\Ex_{f_1\sim q_1}\cdots \inf_{q_T\in\mc{Q}}\sup_{z_T\in\Z} \Ex_{f_T\sim q_T}
	%	\left[ \sum_{t=1}^T \loss(f_t,z_t) - \inf_{f\in\F} \sum_{t=1}^T \loss(f,z_t) \right] \notag \\
	&=\sup_{p_1} \Ex_{z_1\sim p_1} \cdots~ \sup_{p_T} \Ex_{z_T\sim p_T} \left[
	  \sum_{t=1}^T \inf_{f_t \in \F}
	  	\Ex_{z_t \sim p_t} \left[ \loss(f_t,z_t) \right] - \inf_{f\in\F} \sum_{t=1}^T \loss(f,z_t)
	\right], \label{eq:value_equality}
\end{align}	
where suprema over $p_t$ range over all distributions in $\mc{P}$.
\end{theorem}
The question of learnability in the online learning model is now reduced to the study of $\Val_T(\F,\Z)$, taking Eq.~\eqref{eq:value_equality} as the starting point. 

\begin{definition}
	\label{def:learnability}
	A class $\F$ is said to be {\em online learnable} with respect to the given $\Z$ and $\ell$ if
	$$ \limsup_{T\to \infty} \frac{\Val_T(\F, \Z)}{T} \le 0 \ .$$
\end{definition}

Note that our notion of learnability is related to, but distinct from, {\em Hannan consistency} \citep{Hannan57, PLG}. The latter notion requires the iterated game to go on for an infinite number of rounds and is formulated in terms of \emph{almost sure convergence}. In contrast, we consider a distinct game for each $T$ and look at {\em expected} regret. Nevertheless, it is possible to obtain Hannan consistency using the techniques developed in this paper by considering a slightly different game \citep{RakSriTew11}. 

We also remark that the statements in this paper extend to the case when the learner is allowed to make decisions in a larger set $\G$, while the best-in-hindsight term in the regret definition is computed with respect to $\F\subseteq\G$. Such a setting---interesting especially with regard to computational concerns---is termed \emph{improper learning}. For example, prediction with side information (or, the \emph{supervised learning} problem) is one such case, where we choose $\Y \subset \reals$, $\Z = \X \times \Y$, $\F \subseteq \Y^\X = \G$ and $\ell(f,(x,y)) = |f(x)-y|$. \label{page:supervised} This setting will be studied later in the paper. Note that in the proper learning scenario, $\Val_T(\F,\Z) \geq 0$ (e.g. since all $z_t$'s can be chosen to be the same), and thus the ``$\limsup$" in Definition~\ref{def:learnability} can be simply replaced with the limit being equal to zero.

This paper is aimed at understanding the value of the game $\Val_T(\F, \Z)$ for various function classes $\F$. Since our focus is on the complexity of $\F$, we shall often write $\Val_T(\F)$ keeping the dependence on $\Z$ (and $\ell$) implicit. As we show, the sequential complexity notions---that were shown by \citet{RakSriTew13} to characterize uniform martingale Laws of Large Numbers---also give us a handle on the value $\Val_T(\F)$. In the next section, we briefly define these sequential complexity notions and mention some of the key relations between them. A more detailed account of the relationships between sequential complexity measures along with complete proofs can be found in \citep{RakSriTew13}.

\section{Sequential Complexities}

Unlike the well-studied statistical learning scenario with i.i.d.\ data, the online learning problem possesses a certain sequential dependence. Such dependence cannot be captured by classical notions of complexity that are based on a batch of data given as a \emph{tuple} of $T$ examples. A basic unit that does capture temporal dependence is a binary tree. Surprisingly, for the sequential prediction problems considered in this paper, one need not look further than binary trees to capture the relevant complexity.

A \emph{${\mathcal Z}$-valued tree $\z$ of depth $T$} is a complete rooted binary tree with nodes labeled by elements of $\Z$. Such a tree $\z$ is identified with the sequence $(\z_1,\ldots,\z_T)$ of labeling functions  $\z_i : \{\pm 1\}^{i-1} \to \mathcal{Z}$ which provide the labels for each node. Therefore, $\z_1\in\Z$ is the label for the \emph{root} of the tree, while $\z_i$ for $i>1$ is the label of the node obtained by following the path of length $i-1$ from the root, with $+1$ indicating `right' and $-1$ indicating `left'. A \emph{path} of length $T$ is given by the sequence $\epsilon = (\epsilon_1,\ldots,\epsilon_{T}) \in \{\pm1\}^{T}$. For brevity, we shall often write $\z_t(\epsilon)$, where $\epsilon=(\epsilon_1,\ldots,\epsilon_T)$, but it is understood that $\z_t$ depends only on the prefix $(\epsilon_1,\ldots,\epsilon_{t-1})$. 

Now, let  $\epsilon_1,\ldots,\epsilon_T$ be independent Rademacher random variables. Given a $\Z$-valued tree $\z$ of depth $T$, we define the \emph{sequential Rademacher complexity} of a function class $\G \subseteq \reals^\Z$ on a $\Z$-valued tree $\z$ as
$$
\Rad_T(\G,\z) ~\deq~ \En \left[ \sup_{g \in \G} \frac{1}{T}\sum_{t=1}^T \epsilon_t g(\z_t(\epsilon)) \right],
$$
and we denote by $\Rad_T(\G) = \sup_\z \Rad_T(\G,\z)$ its supremum over all $\Z$-valued trees of depth $T$. The importance of the introduced notion stems from the following result \citep[Theorem 2]{RakSriTew13}: for any distribution over a sequence  $(Z_1,\ldots,Z_T)$, we have
\begin{align}\label{eq:mainup}
	\En \left[ \sup_{g\in\G} \frac{1}{T}\sum_{t=1}^T \left( \En \left[g(Z_t)|Z^{t-1}\right] - g(Z_t) \right)  \right] \leq 2\, \Rad_T(\G) \ ,
\end{align}
where $Z^{t-1}=(Z_1,\ldots,Z_{t-1})$. In other words, the martingale version of the uniform deviations of means from expectations is controlled by the worst-case sequential Rademacher complexity. A matching lower bound also holds for the supremum over distributions on sequences in $\Z^T$. It then follows that a uniform martingale Law of Large Numbers holds for $\G$ if and only if $\Rad_T(\G)\to 0$. For i.i.d.\ random variables, a similar statement can be made in terms of the classical Rademacher complexity, and so one might hope that many other complexity notions from empirical process theory have martingale (or we may say, sequential) analogues. Luckily, this is indeed the case (see \cite{RakSriTew13}). As we show in this paper, these generalizations of the classical notions also give a handle on (as well as necessary and sufficient conditions for) online learnability, thus painting a picture that completely parallels statistical learning theory. But before we present our main results, let us recall some key definitions and results from \citep{RakSriTew13}.

In providing further upper bounds on sequential Rademacher complexity, the following definitions of an ``effective size'' of a function class generalize the classical notions of a covering number. A set $V$ of $\reals$-valued trees of depth $T$ is \emph{a (sequential) $\alpha$-cover} (with respect to $\ell_p$ norm) of $\G \subseteq \reals^\Z$ on a tree $\z$ of depth $T$ if
$$
\forall g \in \G,\ \forall \epsilon \in \{\pm1\}^T,\ \exists \v \in V \  \mrm{s.t.}  ~~~~ \left( \frac{1}{T} \sum_{t=1}^T |\v_t(\epsilon) - g(\z_t(\epsilon))|^p \right)^{1/p} \le \alpha .
$$
The \emph{(sequential) covering number} of a function class $\G$ on a given tree $\z$ is defined as 
$$
\N_p(\alpha, \G, \z) ~\deq~ \min\left\{|V| :  V \ \trm{is an }\alpha\text{-cover w.r.t. }\ell_p\trm{ norm of }\G \trm{ on } \z\right\}.
$$
It is straightforward to check that $\N_p(\alpha, \G, \z)\leq \N_q(\alpha, \G, \z)$ whenever $1\leq p\leq q\leq \infty$.

Further define $\N_p(\alpha, \G , T) = \sup_\z \N_p(\alpha, \G, \z) $, the maximal $\ell_p$ covering number of $\G$ over depth $T$ trees. For a class $\G$ of binary-valued functions, we also define a so-called \emph{$0$-cover} (or, cover at scale $0$), denoted by $\N(0,\G,\z)$, as equal to any $\N_p(0, \G, \z)$. The definition of a $0$-cover can be seen as the correct analogue of the \emph{size of a projection} of $\G$ onto a tuple of points in the i.i.d.\ case. The size of this projection in the i.i.d.\ case was the starting point of the work of Vapnik and Chervonenkis.

When $\G \subseteq [-1,1]^\Z$ is a \emph{finite} class of bounded functions, one can show \citep[Lemma 1]{RakSriTew13} that
\begin{align}\label{eq:fin}
\Rad_T(\G,\z)
\le \sqrt{\frac{2 \log |\G|}{T}} ,
\end{align}
a bound that should (correctly) remind the reader of the Exponential Weights regret bound. With the definition of an $\alpha$-cover with respect to $\ell_1$ norm, one can easily extend \eqref{eq:fin} beyond the finite case. Immediately from the definition of $\ell_1$ covering number, it follows that for any $\G\subseteq [-1,1]^\Z$, for any $\alpha>0$, 
\begin{align}
\Rad_T(\G,\z)
\le \alpha + \sqrt{\frac{2\log \N_1(\alpha, \G, \z)}{T}} 
\end{align}
\citep[Eq. (9)]{RakSriTew13}.
A tighter control is obtained by integrating the covering numbers at different scales. To this end, consider the following analogue of the Dudley entropy integral bound. For $p \ge 1$, the \emph{integrated complexity} of a function class $\G \subseteq [-1,1]^\Z$ on a $\Z$-valued tree of depth $T$ is defined as
\begin{align}
	\label{eq:integrated_complexity_def}
\Dudley^p_T (\G, \z) ~\deq~ \inf_{\alpha\geq 0}\left\{4 \alpha + \frac{12}{\sqrt{T}}\int_{\alpha}^{1} \sqrt{\log \ \mathcal{N}_p(\delta, \G, \z) \ } d \delta \right\} 
\end{align}
and $\Dudley^p_T(\G) = \sup_{\z} \Dudley^p_T(\G,\z),$ with $\Dudley^2_T(\G,\z)$ denoted simply by $\Dudley_T(\G,\z)$. We have previously
shown~\citep[Theorem 3]{RakSriTew13} that, for any function class $\G\subseteq [-1,1]^\Z$ and any $\Z$-valued tree $\z$ of depth $T$,
\begin{align}\label{eq:dudley}
\Rad_T(\G,\z) \le \Dudley_T(\G,\z)  .
\end{align}

We next turn to the description of sequential combinatorial parameters. A $\Z$-valued tree $\z$ of depth $d$ is \emph{shattered} by a function class $\G \subseteq \{\pm 1\}^{\Z}$ if for all $\epsilon \in \{\pm1\}^{d}$, there exists $g \in \G$ such that $g(\z_t(\epsilon)) = \epsilon_t$ for all $t \in [d]$. The \emph{Littlestone dimension} $\ldim(\G, \Z)$ is the largest positive integer $d$ such that $\G$ shatters a $\Z$-valued tree of depth $d$ \citep{Lit88, BenPalSha09}.  The scale-sensitive version of Littlestone dimension is defined as follows. A $\Z$-valued tree $\z$ of depth $d$ is \emph{$\alpha$-shattered} by a function class $\G \subseteq \reals^\Z$ if there exists an $\reals$-valued tree $\s$ of depth $d$ such that 
$$
\forall \epsilon \in \{\pm1\}^d , \ \exists g \in \G \ \ \ \trm{s.t. } \forall t \in [d], \  \epsilon_t (g(\z_t(\epsilon)) - \s_t(\epsilon)) \ge \alpha/2 .
$$
The tree $\s$ will be called a \emph{witness to shattering}. The \emph{ (sequential) fat-shattering dimension} $\fat_\alpha(\G, \Z)$ at scale $\alpha$ is the largest $d$ such that $\G$ $\alpha$-shatters a $\Z$-valued tree of depth $d$. 

The notions introduced above can be viewed as sequential generalizations of the VC dimension and the fat-shattering dimension where tuples of points get replaced by complete binary trees. In fact, one recovers the classical notions if the tree $\z$ in the above definitions is restricted to have the same values within a level (hence, no temporal dependence). Crucially, the sequential combinatorial analogues provide control for the growth of sequential covering numbers, justifying the definitions.

First, let $\G \subseteq {\{0,\ldots, k\}}^\Z$ be a class of functions with $\fat_2(\G) = d$. Then, it can be shown that \citep[Theorem 4]{RakSriTew13}, for any $T\geq 1$,
$$ \N_\infty(1/2, \G , T) \leq \sum_{i=0}^d {T\choose i} k^i \leq \left(ekT \right)^d .$$
For the second result \citep[Corollary 1]{RakSriTew13}, suppose $\G$ is a class of $[-1,1]$-valued functions on $\Z$. Then, for any $\alpha >0$, and any $T\geq 1$,
\begin{equation}\label{eq:fromcoveringtofat}
\N_\infty(\alpha, \G, T) \leq \left(\frac{2e T}{\alpha}\right)^{\fat_{\alpha} (\G) } .
\end{equation}
Finally, we recall a bound on the size of the 0-cover in terms of the $\fat_1$ combinatorial parameter \citep[Theorem 5]{RakSriTew13}. For a class $\G \subseteq {\{0,\ldots, k\}}^\Z$ with $\fat_1(\G) = d$, we have
	\begin{align} 
		\label{eq:ldim_generalization}
		\N (0, \G , T) \leq \sum_{i=0}^d {T\choose i} k^i \leq \left(ekT \right)^d \ .
	\end{align}
	In particular, for $k=1$ (that is, binary classification) we have $\fat_1(\G) = \ldim(\G)$. The inequality  \eqref{eq:ldim_generalization} is therefore a sequential analogue of the celebrated Vapnik-Chervonenkis-Sauer-Shelah lemma.

\section{Structural Properties}
\label{sec:structural}

For the examples developed in this paper, it will be crucial to exploit a number of useful properties that $\Rad_T(\G)$ satisfies. These properties allow one to establish online learnability for complex function classes even if no explicit learning algorithms are available. 

We first state some properties that are easily proved but are nevertheless very useful.

\begin{lemma}\label{lem:rad_properties}
Let $\F, \G \subseteq \reals^\Z$ and let $\conv(\G)$ denote the convex hull of $\G$. Let $\z$ be any $\Z$-valued tree of depth $T$. Then the following properties hold.
\begin{enumerate}
\item
If $\F\subseteq \G$, then $\Rad_T(\F,\z) \leq \Rad_T(\G,\z)$.
\item
$\Rad_T (\conv(\G),\z) = \Rad_T(\G,\z)$
\item
$\Rad_T(c\G,\z) = |c|\Rad_T(\G,\z)$ for all $c\in\reals$.
\item
For any $h: \Z \to \reals$, $\Rad_T(\G+h,\z) =  \Rad_T(\G,\z)$ where $\G+h = \{g+h: g\in\G\}$.
\end{enumerate}
\end{lemma}
These properties match those of the classical Rademacher complexity \citep{BarMed03} and can be proved in essentially the same way (we therefore skip the straightforward proofs). 

The next property is a key tool for many of the applications: it allows us to bound the sequential Rademacher complexity for the Cartesian product of function classes composed with a Lipschitz mapping in terms of complexities of the individual classes. 
\begin{lemma}\label{lem:inflip}
Let $\G = \G_1 \times \ldots \times \G_k$ where each $\G_j \subseteq [-1,1]^{\Z}$. Further, let $\phi : \reals^k \times \Z \to \reals$ be such that $\phi(\cdot,z)$ is $L$-Lipschitz with respect to $\|\cdot\|_\infty$ for all $z\in\Z$, and let
$$\phi\circ\G = \left\{z\mapsto \phi((g_1(z),\ldots,g_k(z)), z): g_j\in\G_j\right\}.$$ 
Then we have
$$
\textstyle\Rad_T(\phi \circ \G) \le 8\,L\,\left(1+ 4\sqrt{2}\log^{3/2}(eT^2)\right) \sum_{j=1}^k \Rad_T(\G_j) 
$$
as long as $\Rad_T(\G_j) \ge 1/T$ for each $j$. 
\end{lemma}

Let us explicitly state the more familiar contraction property, an immediate corollary of the above result.
\begin{corollary}
	\label{cor:contraction}
	Fix a class $\G\subseteq [-1,1]^\Z$ with $\Rad_T(\G)\ge 1/T$ and a function $\phi:\reals\times \Z\to\reals$.
	Assume $\phi(\cdot,z)$ is $L$-Lipschitz for all $z \in \Z$. Then
	$$ \Rad_T (\phi\circ\G) \leq 8\,L\,\left(1+4\sqrt{2}\log^{3/2}(eT^2)\right) \cdot\Rad_T(\G)$$
	where $\phi\circ\G = \{z \mapsto \phi(g(z),z): g\in \G\}$.
\end{corollary}

We state another useful corollary of Lemma~\ref{lem:inflip}.

\begin{corollary}
	\label{cor:radem_binary}
For a fixed binary function $b : \{\pm1\}^k \to \{\pm1\}$ and classes $\G_1, \ldots, \G_k$ of $\{\pm 1\}$-valued functions,
$$
\textstyle\Rad_T(b(\G_1,\ldots,\G_k)) \le \mc{O}\left(\log^{3/2}(T)\right) \sum_{j=1}^k \Rad_T(\G_j) 
$$
\end{corollary}

Note that, in the classical case, the Lipschitz contraction property holds without any extra poly-logarithmic factors in $T$ \citep{LedouxTalagrand91}. It is an open question whether the poly-logarithmic factors can be removed in the results above. It is worth pointing out ahead of time that Theorem~\ref{thm:valrad_supervised} below---in the setting of supervised learning with convex Lipschitz loss---does allow us to avoid the extraneous factor that would otherwise appear from a combination of Theorem~\ref{thm:valrad} and Corollary~\ref{cor:contraction}.

\section{Main Results}

We now relate the value of the game to the worst case expected value of the supremum of an empirical process. However, unlike empirical processes that involve i.i.d.\ sums, our process involves a sum of \emph{martingale differences}. In view of \eqref{eq:mainup}, the expected supremum can be further upper-bounded by the sequential Rademacher complexity.

\begin{theorem}\label{thm:valrad}
The minimax value is bounded as
$$
\frac{1}{T}\Val_T(\F) \le \sup_{\Prob}\En \sup_{g \in \loss(\F)} \left[ \frac{1}{T}\sum_{t=1}^T \left(  \vphantom{ \sum } % \vphantom to enlarge the parentheses a bit
\En [g(Z_t)|Z_{1},\ldots,Z_{t-1}]- g(Z_t)\right) \right] \le 2\, \Rad_T(\loss(\F))
$$
where $\loss(\F) = \{\loss(f,\cdot): f\in\F\}$ and the supremum is taken over all distributions $\Prob$ over $(Z_1,\ldots,Z_T)$.
\end{theorem}

We can now employ the tools developed earlier in the paper to upper bound the value of the game. Interestingly, any non-trivial upper bound guarantees \emph{existence} of a prediction strategy that has sublinear regret irrespective of the sequence of the moves of the adversary. This complexity-based approach of establishing learnability should be contrasted with the purely algorithm-based approaches found in the literature.

\subsection{Supervised Learning}
\label{sec:supervised}

In this subsection we study the \emph{supervised learning problem} mentioned earlier in the paper. In this improper learning scenario, the learner at time $t$ picks a function $f_t:\X\to \reals$ and the adversary provides the input target pair $z_t=(x_t,y_t)\in \X\times \Y$ where $\Y\subset\reals$. In particular, the \emph{binary classification} problem corresponds to the case $\Y= \{\pm1\}$. Let $\F\subseteq \Y^\X$ and let us fix the absolute value loss function  $\loss(\hat{y},y) = |\hat{y} - y|$.  %As we are interested in \emph{prediction}, we allow $f_t$ to be outside of $\F$.
While we focus on the absolute loss, it is easy to see that all the results hold (with modified rates) for any loss $\ell(\hat{y},y)$ such that for all $\hat{y}$ and $y$,  
$
\phi(\ell(\hat{y},y)) \le |\hat{y} - y| \le \Phi(\ell(\hat{y},y))
$
where $\Phi$ and $\phi$ are monotonically increasing functions. For instance, the squared loss $(\hat{y}-y)^2$ is a classic example. 

We now observe that the value of the improper supervised learning game can be equivalently written as
\begin{align}
	\label{eq:sup_value}
 \Val^{\trm{S}}_T(\F) = 
 \sup_{x_1} \inf_{q_1\in \tilde{\QD}} \sup_{y_1} \Ex_{\hat{y}_1 \sim q_1} \cdots~  \sup_{x_T} \inf_{q_T\in \tilde{\QD}}\sup_{y_T}  \Ex_{\hat{y}_T \sim q_T} \left[ \sum_{t=1}^T \loss(\hat{y}_t,y_t) - \inf_{f\in \F}\sum_{t=1}^T \loss(f(x_t),y_t) \right]
\end{align}
where $\tilde{\QD}$ denotes the set of probability distributions over $\Y$ and $\hat{y}_t$ has distribution $q_t$. This equivalence is easy to verify: we may view the choice $f_t:\X\to\Y$ as pre-specifying predictions $f_t(x)$ for all the possible $x\in\X$, while alternatively we can simply make the choice $\hat{y}_t\in\Y$ having observed the particular move $x_t\in\X$. The advantage of rewriting the game in the form \eqref{eq:sup_value} is that the minimax theorem only needs to be applied to the pair $\hat{y}_t$ and $y_t$, given the fixed choice $x_t$. The minimax theorem then holds even if weak compactness cannot be shown for the set of distributions on the original space of functions of the type $\X\to\Y$. 

An examination of the proof of Theorem~\ref{thm:valrad} reveals that the value \eqref{eq:sup_value} is upper bounded in exactly the same way, and the side information simply appears as an additional tree $\x$ in sequential Rademacher complexity, giving us:
\begin{align}
	\label{eq:sup_value_upper_bound_no_contraction}
	\frac{1}{T}\Val^{\trm{S}}_T(\F)\leq 2\sup_{\x,\y}\En \left[ \sup_{f\in\F}\frac{1}{T}\sum_{t=1}^T \epsilon_t \loss(f(\x_t(\epsilon)), \y_t(\epsilon))\right] \ .
\end{align}
However, for the supervised learning setting, we can strengthen Theorem~\ref{thm:valrad}. The following theorem allows us to remove any convex Lipschitz loss (including the absolute loss) before passing to the sequential Rademacher complexity.
\begin{theorem}\label{thm:valrad_supervised}
Let $\Y=[-1,1]$ and suppose, for any $y \in \Y$, $\loss(\cdot,y)$ is convex and $L$-Lipschitz. Then the minimax value of a supervised learning problem is upper bounded as
$$
\frac{1}{T} \Val^{\trm{S}}_T(\F) \le 2 \, L \, \Rad_T(\F) .
$$
\end{theorem}
We remark that the contraction property for sequential Rademacher complexity, stated in Section~\ref{sec:structural}, yields an extraneous logarithmic factor when applied to \eqref{eq:sup_value_upper_bound_no_contraction}; here, we achieve the desired bound by removing the Lipschitz function directly during the symmetrization step.

Armed with the theorem, we now prove the following result. 

\begin{proposition}\label{prop:uplow}
Consider the supervised learning problem with a function class $\F \subseteq [-1,1]^\X$ and absolute loss $\ell(\hat{y},y) = |\hat{y}-y|$. Then, for any $T\geq 1$, we have
\begin{align}\label{eq:uplow}
\frac{1}{4\sqrt{2}} \sup_{\alpha}\left\{\alpha \sqrt{\frac{\min\left\{\fat_{\alpha}, T\right\}}{T}} \right\} &\le \Rad_T(\F) \le \frac{1}{T}\Val^{\trm{S}}_T(\F) \le 2\Rad_T(\F) \leq 2\Dudley_T(\F) \notag\\
&~~~~\le~2\inf_{\alpha}\left\{4\alpha + \frac{12}{\sqrt{T}} \int_{\alpha}^{1} \sqrt{ \fat_\beta \log\left(\frac{2 e T}{\beta}\right)}\ d \beta \right\} \ ,
\end{align}
where $\fat_\alpha = \fat_\alpha(\F)$.
\end{proposition}

The proposition above implies that finiteness of the fat-shattering dimension at all scales is \emph{necessary and sufficient} for online learnability of the supervised learning problem. Further, all the complexity notions introduced so far are within a poly-logarithmic factor from each other whenever the problem is learnable. These results are summarized in the next theorem:
\begin{theorem}\label{thm:tight}
	For any function class $\F \subseteq [-1,1]^\X$, the following statements are equivalent
	\begin{enumerate}
		\item Function class $\F$ is online learnable in the supervised setting with absolute loss.
		\item Sequential Rademacher complexity satisfies $\lim_{T\to\infty} \Rad_T(\F) = 0$.
		\item For any $\alpha > 0$, the scale-sensitive dimension $\fat_\alpha(\F)$ is finite.
	\end{enumerate}
Moreover, if the function class is online learnable, then the value of the supervised game $\Val^{\trm{S}}_T(\F)$, the sequential Rademacher complexity $\Rad_T(\F)$, and the integrated complexity $\Dudley_T(\F)$ are within a multiplicative factor of $\mc{O}(\log^{3/2} T)$ of each other.
\end{theorem}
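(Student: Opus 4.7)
The plan is to extract everything from four already-proved inequalities: the upper and lower bounds of Proposition~\ref{prop:uplow}, Theorem~\ref{thm:dudley}, and Lemma~\ref{lem:dudley_lower_by_rad}. I would establish the learnability dichotomy (items 1~$\Leftrightarrow$~2) first, then read off the $\mc{O}(\log^{3/2}T)$ comparison from the same bounds.

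For (2)$\Rightarrow$(1), fix any $\alpha>0$ and invoke the Dudley-type upper bound of Proposition~\ref{prop:uplow}. Since $\beta\mapsto\fat_\beta$ is non-increasing in $\beta$, we have $\int_\alpha^1\sqrt{\fat_\beta \log(2eT/\beta)}\,d\beta \leq \sqrt{\fat_\alpha\log(2eT/\alpha)}$, and the proposition then yields $\Val^{\trm{S}}_T(\F)/T \leq 8\alpha + 24\sqrt{\fat_\alpha\log(2eT/\alpha)/T}$. For each fixed $\alpha>0$ the second term vanishes as $T\to\infty$, giving $\limsup_T\Val^{\trm{S}}_T(\F)/T\leq 8\alpha$; since $\alpha$ was arbitrary this yields online learnability.

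For the converse (1)$\Rightarrow$(2) I would argue by contrapositive: if $\fat_{\alpha_0}(\F)=\infty$ for some $\alpha_0>0$, then $\min\{\fat_{\alpha_0},T\}=T$ for every $T$, and the lower bound in Proposition~\ref{prop:uplow} gives $\Val^{\trm{S}}_T(\F)\geq \frac{\alpha_0}{2\sqrt{2}}T$, a linear lower bound ruling out learnability.

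For the $\mc{O}(\log^{3/2}T)$ equivalence of the three complexities I would chain: (a) $\Rad_T(\F)\leq \Val^{\trm{S}}_T(\F)\leq 2\Rad_T(\F)$ from Proposition~\ref{prop:uplow}; (b) $\Rad_T(\F)\leq \Dudley_T(\F)$ from Theorem~\ref{thm:dudley}; and (c) $\Dudley_T(\F)\leq 8\Rad_T(\F)(1+4\sqrt{2}\log^{3/2}(eT^2))$ from Lemma~\ref{lem:dudley_lower_by_rad} when $\Rad_T(\F)\geq 1$. Together these sandwich all three quantities within a multiplicative $\mc{O}(\log^{3/2}T)$ factor. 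The only nuisance is the degenerate range $\Rad_T(\F)<1$ in (c), which I would handle either by bounding the Dudley integral directly at the scale $\alpha\asymp 1/\sqrt{T}$ via Corollary~\ref{cor:l2_norm_bound}, or by noting that when all three complexities are bounded by absolute constants the ratio is trivially $\mc{O}(\log^{3/2}T)$; this is the only place the argument needs any attention, since all substantive work is already encapsulated in the cited results.
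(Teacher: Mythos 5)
Your proposal is correct and follows essentially the same route as the paper: both directions of the equivalence are read off from the lower and upper bounds of Proposition~\ref{prop:uplow} (with the same monotonicity bound on the entropy integral), and the $\mc{O}(\log^{3/2}T)$ comparison is obtained by chaining Proposition~\ref{prop:uplow}, Theorem~\ref{thm:dudley}, and Lemma~\ref{lem:dudley_lower_by_rad}. Your explicit handling of the degenerate case $\Rad_T(\F)<1$ is a point the paper passes over silently (Lemma~\ref{lem:dudley_lower_by_rad} simply assumes $\Rad_T(\F)\ge 1$ and calls it mild), so that extra care is welcome but not a departure in approach.
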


\begin{remark}
	Additionally, the three statements of Theorem~\ref{thm:tight} are equivalent to $\F$ satisfying a martingale version of the uniform Law of Large Numbers. This property is termed \emph{Sequential Uniform Convergence} by~\citet{RakSriTew13}, and we refer to their paper for more details.
\end{remark}

For binary classification, we write $\Val^{\trm{Binary}}_T$ for $\Val^{\trm{S}}_T$. This case has been investigated thoroughly by \citet{BenPalSha09} and indeed served as a key motivation for this paper. As a consequence of Proposition~\ref{prop:uplow} and Eq.~\eqref{eq:ldim_generalization}, we have a tight control on the value of the game for the binary classification problem. Note that the absolute loss in the binary
classification setting is simply the $0$-$1$ loss $\ell(\hat{y},y) = \ind{\hat{y} \neq y}$, where $\ind{\mathcal{U}}$ is $1$ if $\mathcal{U}$ is true and $0$ otherwise.
\begin{corollary}
For the binary classification problem with function class $\F$ and the $0$-$1$ loss, we have
$$
K_1  \sqrt{T \min\left\{\ldim(\F), T\right\}}  \le \Val^{\mathrm{Binary}}_T(\F)  \le K_2  \sqrt{T\ \ldim(\F) \log T}
$$
for some universal constants $K_1,K_2>0$.
\end{corollary}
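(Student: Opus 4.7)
The plan is to specialize Proposition~\ref{prop:uplow} to a binary-valued class $\F\subseteq\{\pm1\}^\X$. The essential input is the observation (noted in the paper just after the definition of $\fat_\alpha$) that for a binary-valued class $\fat_\alpha(\F)=\ldim(\F)$ for every $\alpha\in(0,2]$. So both sides of \eqref{eq:uplow} reduce to expressions involving $\ldim(\F)$ alone, and the corollary is obtained by plugging in the appropriate scale $\alpha$ and evaluating the resulting integral.

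For the lower bound, I would just substitute $\alpha=2$ in the supremum on the left hand side of~\eqref{eq:uplow}, which is legitimate because $\fat_2(\F)=\ldim(\F)$. This gives
\[
\Val^{\trm{S}}_T(\F)\;\ge\;\tfrac{1}{2\sqrt{2}}\,\sup_{\alpha>0}\alpha\sqrt{T\min\{\fat_\alpha(\F),T\}}\;\ge\;\tfrac{1}{\sqrt{2}}\sqrt{T\min\{\ldim(\F),T\}},
\]
yielding $K_1=1/\sqrt{2}$ (up to the constant from the proposition).

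For the upper bound, I would use the rightmost inequality in~\eqref{eq:uplow}, setting $\fat_\beta(\F)=\ldim(\F)$ for every $\beta\in(0,1]$, and then choose the cutoff $\alpha=1/T$. The linear term $4T\alpha$ becomes the constant $4$, while the integral becomes
\[
12\sqrt{T}\int_{1/T}^{1}\!\sqrt{\ldim(\F)\,\log(2eT/\beta)}\;d\beta\;\le\;12\sqrt{T\,\ldim(\F)}\,\sqrt{\log(2eT^{2})},
\]
since $\log(2eT/\beta)\le\log(2eT^{2})$ on the range of integration, and the length of the interval is at most $1$. Combining gives $\Val^{\trm{S}}_T(\F)\le 2\Rad_T(\F)\le 2\Dudley_T(\F)=O(\sqrt{T\,\ldim(\F)\,\log T})$, which is the desired upper bound with some universal $K_2$.

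There is no real obstacle here; the main sanity check is just that the stated bounds in Proposition~\ref{prop:uplow}, which are written for $\F\subseteq[-1,1]^\X$, apply equally to $\F\subseteq\{\pm1\}^\X$ (they do, since $\{\pm1\}\subseteq[-1,1]$), and that the scale-insensitivity $\fat_\alpha(\F)=\ldim(\F)$ is available across the whole range $\alpha\in(0,2]$ so that both the lower-bound supremum and the upper-bound integral simplify cleanly to $\ldim(\F)$ times elementary factors.
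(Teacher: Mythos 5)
Your proof is correct and follows essentially the same route the paper intends: specialize Proposition~\ref{prop:uplow} using the scale-insensitivity $\fat_\alpha(\F)=\ldim(\F)$ for $\alpha\in(0,2]$, take $\alpha=2$ in the supremum for the lower bound, and pick a cutoff of order $1/T$ in the integral for the upper bound. The paper gives no separate proof of this corollary precisely because it reduces to this plug-in computation, which you have carried out correctly.
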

Both the upper and the lower bound in the above result were originally derived in \cite{BenPalSha09}. Notably, we achieved the same bounds non-constructively through purely combinatorial and covering number arguments.

It is natural to ask whether being able to learn in the online model is different from learning in the i.i.d.\ model (in the distribution-free supervised setting). The standard example that exhibits a gap between the two frameworks (see, e.g., \cite{Lit88,BenPalSha09}) is binary classification using the class of step functions
$$\F = \left\{f_\theta(x)=\ind{x\leq \theta}: \theta\in[0,1]\right\}$$
on $[0,1]$. This class has VC dimension $1$, but is \emph{not} learnable in the online setting. Indeed, it is possible to verify that the Littlestone dimension is infinite. Interestingly, the closely-related class of ``ramp'' functions with slope $L>0$
	$$\F_L = \big\{f_\theta(x)= \ind{x \le \theta} + (1- L(x-\theta))\ind{\theta < x \le \theta + 1/L} : \theta\in[0,1]\big\}$$ 
\emph{is} learnable (say for supervised learning using absolute loss) in the online setting (and hence also in the i.i.d.\ case). Furthermore, the larger class of all bounded $L$-\emph{Lipschitz} functions on a bounded interval is also online learnable (see Eq.~\eqref{eq:bound_by_metric_entropy} and proof of Proposition~\ref{prop:isotron}). Once again, we are able to make these statements from purely complexity-based considerations, without exhibiting an algorithm. Further examples where we can demonstrate online learnability are explored in Section~\ref{sec:examples}.

\subsection{Online Convex Optimization}

Over the past decade, Online Convex Optimization (OCO) has emerged as a unified online learning framework \citep{Zinkevich03,shalev2011online}. Various methods, such as Exponential Weights, can be viewed as instances of online mirror descent, solving the associated OCO problem. Much research effort has been devoted to understanding this abstract and simplified setting. It is tempting to say that any problem of online learning, as defined in the Introduction, can be viewed as OCO (in fact, online \emph{linear} optimization) over the set of probability distributions; however, one should also recognize that by linearizing the problem, any interesting structure is lost and one instead suffers from the possibly unnecessary dependence on the number of functions in the class $\F$. Nevertheless, OCO is a central part of the recent literature, and we will study this scenario using techniques developed in this paper.

The standard setting of online convex optimization is as follows. The set of moves of the learner $\F$ is a bounded closed convex subset of a Banach space $(\mc{B}, \|\cdot\|)$ with $\|f\| \le D$ for all $f \in \F$ (the reader can think of $\reals^d$ equipped with an $\ell_p$ norm for simplicity). Let $\|\cdot\|_\star$ be the dual norm. The adversary's set $\Z$ consists of convex $G$-Lipschitz (with respect to $\|\cdot\|_\star$) functions over $\F$:
\[
	\Z = \Zcvx = \left\{ g:\F \to \reals \::\: g \text{ convex and } G\text{-Lipschitz w.r.t. } \|\cdot\|_\star \right\} \ .
\]
Let the loss function be $\loss(f,g) = g(f)$, the evaluation of the adversarially chosen function at $f$. For the particular case of online \emph{linear} optimization, we instead take
\[
	\Z = \Zlin = \{ f \mapsto \inner{f, z} \::\: \|z\|_\star \le G \} 
\]
with $\Z$ now a subset of the dual space. It is well-known (see, e.g., \cite{abernethy08optimal}) that the online convex optimization problem (without further assumptions on the functions in $\Zcvx$) is as hard as the corresponding linear optimization problem with $\Zlin$ if one considers deterministic algorithms. The same trivially extends to randomized methods:
\begin{lemma}
	\label{lem:equal_value}
Suppose $\F, \Zcvx, \Zlin$ be defined as above. Then we have
\[
	\Val_T(\F,\Zcvx) = \Val_T(\F,\Zlin) \ .
\]
\end{lemma}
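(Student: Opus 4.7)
The equality breaks into two inequalities. The easy direction $\Val_T(\F,\Xcvx) \ge \Val_T(\F,\Xlin)$ is immediate from the inclusion $\Xlin \subseteq \Xcvx$: every map $u \mapsto \inner{u,x}$ with $\|x\|_\star \le G$ is linear, hence convex, and by duality $|\inner{u-v,x}| \le \|x\|_\star\|u-v\|$, it is $G$-Lipschitz; widening the adversary's move set from $\Xlin$ to $\Xcvx$ can only increase the value.

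For the hard direction $\Val_T(\F,\Xcvx) \le \Val_T(\F,\Xlin)$, I would work with the dual form from Theorem~\ref{thm:minimax} and reduce via subgradients. Fix an adversary strategy $(p_t)_t$ for the convex game, where $p_t$ is a conditional distribution over $\Xcvx$ given $g_{1:t-1}$. Let $u^\star_t = \argmin{u\in K}\Es{g\sim p_t}{g(u)}$ and introduce the random vector $x_t := \nabla g_t(u^\star_t)$ with $g_t\sim p_t$. Since each $g_t$ is $G$-Lipschitz, $\|x_t\|_\star\le G$, so $x_t$ is a valid linear adversary move. The pointwise convexity bound $g_t(u^\star_t) - g_t(u) \le \inner{x_t,\,u^\star_t - u}$ (holding for every realization of $g_t$ and every $u\in K$), summed in $t$, infimized in $u$, and integrated over $g_{1:T}$, yields
\[
\sum_t \En[g_t(u^\star_t)] - \En\,\inf_u\sum_t g_t(u) \le \sum_t \inner{u^\star_t,\,\En[x_t]} - \En\,\inf_u\sum_t \inner{u,x_t}.
\]
First-order optimality of $u^\star_t$ on $K$ reads $\inner{\En[x_t],\,u-u^\star_t}\ge 0$ for every $u\in K$, whence $\inner{u^\star_t,\,\En[x_t]} = \inf_u \Es{x\sim p'_t}{\inner{u,x}}$, where $p'_t$ is the conditional law of $x_t$ given $g_{1:t-1}$; the right-hand side is therefore the dual value of the linear game under the strategy $(p'_t)_t$.

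The main remaining obstacle is that $p'_t$ is adapted to $g_{1:t-1}$, whereas Theorem~\ref{thm:minimax} for the linear game admits only strategies adapted to $x_{1:t-1}$. I would resolve this by marginalizing: set $\tilde p_t := \mathrm{Law}(x_t \mid x_{1:t-1})$. Using the tower property ($x_{1:t-1}$ is measurable w.r.t.\ $g_{1:t-1}$) together with the Jensen-type inequality $\inf_u \En[\,\cdot\,] \ge \En[\,\inf_u \cdot\,]$, one obtains
\[
\inf_u \En[\inner{u,x_t}\mid x_{1:t-1}] = \inf_u \En\bigl[\,\En[\inner{u,x_t}\mid g_{1:t-1}]\,\bigm|\, x_{1:t-1}\bigr] \ge \En\bigl[\,\inf_u \En[\inner{u,x_t}\mid g_{1:t-1}]\,\bigm|\, x_{1:t-1}\bigr].
\]
Integrating and summing in $t$ shows that the linear-game dual value under the valid strategy $(\tilde p_t)$ is at least that under $(p'_t)$, while the comparator term $\En\inf_u\sum_t \inner{u,x_t}$ depends only on the joint law of $x_{1:T}$ and is preserved. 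Taking the supremum over $(p_t)_t$ completes the proof.
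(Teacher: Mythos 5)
Your argument is correct in substance, but it takes a genuinely different route from the paper's. The paper proves the lemma in two short strokes: a Jensen argument (for convex losses, replacing a randomized player strategy by its conditional mean only helps the player, so $\Val_T = \Valdet_T$ for both games), followed by a citation of Theorem~14 in \cite{abernethy08optimal} for the identity $\Valdet_T(\F,\Xcvx)=\Valdet_T(\F,\Xlin)$. You instead give a self-contained argument at the level of the dual representation from \thmref{thm:minimax}: you linearize each $g_t$ at the conditional Bayes point $u^\star_t$, observe that the subgradient $x_t$ is a valid linear-game move, use first-order optimality of $u^\star_t$ to identify $\inner{\En[x_t\mid g_{1:t-1}],u^\star_t}$ with the linear game's per-round Bayes value, and then marginalize the adversary's law from the $g$-filtration to the $x$-filtration via Jensen. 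This essentially reproves (a randomized version of) the cited theorem rather than invoking it, which is arguably more informative. A few technical points you should tighten: convex Lipschitz functions need not be differentiable, so $x_t$ must be a measurable selection from $\partial g_t(u^\star_t)$ rather than a gradient, and the first-order condition for $u^\star_t$ requires the interchange $\partial_u \Es{g'_t}{g'_t(u^\star_t)} = \Es{g'_t}{\partial g'_t(u^\star_t)}$ (a standard but nontrivial fact, and the selection of $x_t$ must be coordinated with it); also, your $\En[x_t]$ is really the conditional expectation $\En[x_t\mid g_{1:t-1}]$ throughout and should be written as such, since $u^\star_t$ is $g_{1:t-1}$-measurable. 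With those repairs, the proof stands.
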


We will now show how to use the above result to derive minimax regret guarantees for OCO. The reader may wonder why we do not directly try to bound the value $\Val_T(\F,\Zcvx)$ by $\Rad_T(\F,\Zcvx)$. In fact, this proof strategy cannot give a non-trivial bound if $\F$ is a subset of a high-dimensional (or infinite-dimensional) space \citep[Sec. 4.1]{ShalevShSrSr09}. Instead, we use the lemma above to bound the value of the game where adversary plays convex functions with that of the game where adversary plays linear functions.

A function $\Psi: \F \to \reals$
is $(\sparam,q)$-uniformly convex (for $q\in[2,\infty)$) on $\F$ with respect to a norm $\|\cdot\|$ if, for all $\theta \in [0,1]$ and $f_1,f_2 \in \F$,
\[
	\Psi( \theta f_1 + (1-\theta) f_2 ) \le \theta \Psi(f_1) + (1-\theta) \Psi(f_2) - \frac{\sparam\,\theta\,(1-\theta)}{q} \| f_1 - f_2 \|^q \ .
\]
A $(\sparam,2)$-uniformly convex function will be called $\sparam$-strongly convex. 

We will give examples shortly but we first state a proposition that is useful to bound the sequential Rademacher complexity of  linear function classes. The crucial duality fact exploited in its proof is that $\Psi$ is $(\sparam,q)$-uniformly convex
with respect to $\|\cdot\|$ if and only if $\Psi^\star$ is $(1/\sparam,p)$-uniformly smooth with respect to $\|\cdot\|_\star$ where $1/p+1/q =1$.

\begin{proposition}[\cite{RakSriTew13}]
	\label{prop:rad_linear_functions}
Let $\F$ be a subset of some Banach space $\mc{B}$ with norm $\|\cdot\|$ and let $\Z$ be a subset of the dual space $\mc{B}^\star$ equipped with norm $\|\cdot\|_\star$. Suppose that $\Psi:\F \to \reals$ is $(\sparam,q)$-uniformly convex with respect to $\|\cdot\|$ and $0 \le \Psi(f) \le \Psimax$ for all $f \in \F$. Then we have
\[
	\Rad_T(\F) \le C_p \|\Z\|_\star \left( \frac{\Psimax^{p-1} }{\sparam \, T^{p-1}} \right)^{1/p} ,
\]
where $\|\Z\|_\star = \sup_{z \in \Z}\ \|z\|_\star$, $p$ is such that $1/p+1/q=1$, and $C_p = (p/(p-1))^{\frac{p-1}{p}}$.
\end{proposition}

Using the above Proposition in conjunction with Lemma~\ref{lem:equal_value} and Theorem~\ref{thm:valrad}, we can immediately conclude that
$$
\Val_T(\F,\Zcvx) \le 2\,T\,\Rad_T(\F) \le 2G\sqrt{ \frac{2\,\Psimax\,T}{\sigma}}$$
for any non-negative function $\Psi:\F \to \reals$ that is $\sparam$-strongly convex w.r.t.
$\|\cdot\|$. Note that, typically, $\Psimax$ will depend on $D$. For example, in the particular case when $\|\cdot\| = \|\cdot\|_\star = \|\cdot\|_2$,
we can take $\Psi(u) = \tfrac{1}{2}\|u\|_2^2$ and the above bound becomes $2GD\sqrt{T}$ and recovers the guarantee for
the online gradient descent algorithm. In general, for $\|\cdot\| = \|\cdot\|_p$ and $\|\cdot\|_\star = \|\cdot\|_q$, we can use
$\Psi(u) = \tfrac{1}{2}\|u\|_p^2$ to get a bound of $2GD\sqrt{T/(p-1)}$ since $\Psi$ is $(p-1)$-strongly convex w.r.t. $\|\cdot\|_p$.
These $\mathcal{O}(\sqrt{T})$ regret rates are not new but we re-derive them to illustrate the usefulness of the tools we developed.

\section{Further Examples}
\label{sec:examples}

Now we present some further applications of the tools we have developed in this paper for some specific learning
problems.  To begin, we show how to bound the sequential Rademacher complexity of functions computed by neural networks. Then, we derive margin based regret bounds in a fairly general setting.  The classical analogues of these margin bounds have played a big role in the modern theory of
supervised learning where they help explain the success of linear classifiers in high dimensional spaces (see, for
example,  \cite{SchapireFrBaLe97,KoltchinskiiPa02}).  We then study the complexity of classes formed by decision trees, analyze the setting of transductive learning, and consider an online version of the Isotron problem. Finally, we make a connection to the seminal work of
\cite{CesaBianLugo99} by re-deriving their bound on the minimax regret in a static experts game in terms of the classical 
Rademacher averages.

\subsection{Neural Networks}
We provide below a bound on the sequential Rademacher complexity for  classic multi-layer neural networks thus showing they are learnable in the online setting. The model of neural networks we consider below and the bounds we provide are analogous to the ones considered in the i.i.d.\ setting by \citet{BarMed03}. 

Consider a $k$-layer $1$-norm neural network, defined by a base function class $\F_1$ and, recursively, for each $2 \le i \le k$,
$$
\F_i = \left\{x \mapsto \sum_{j} w^{i}_j \sigma\left( f_j(x)\right) ~~\Big|~~ \forall j\   f_j \in \F_{i-1} , \|w^{i}\|_1 \le B_i\right\} \ ,
$$
where $\sigma$ is a Lipschitz transfer function, such as the sigmoid function.

\begin{proposition}
	\label{prop:NN}
Suppose $\sigma : \reals \to [-1,1]$ is $L$-Lipschitz with $\sigma(0)=0$. Then it holds that
$$
\Rad_T(\F_k) \le \left(\prod_{i=2}^k 16 B_i  \right) L^{k-1} \left(1+4\sqrt{2}\log^{3/2}(eT^2)\right)^{k}  \Rad_T(\F_1).
$$
In particular, for the case of 
$$
\textstyle\F_1 = \left\{x \mapsto \sum_{j} w^{1}_j x_j  ~~\Big|~~ \|w\|_1 \le B_1\right\}
$$
and $\X \subset \reals^d$ we have the bound
$$
\Rad_T(\F_k) \le \left(\prod_{i=1}^k 16 B_i \right) L^{k-1} \left(1+4\sqrt{2}\log^{3/2}(eT^2)\right)^{k} X_\infty  \sqrt{\frac{2  \log d}{T}}
$$
where $X_\infty$ is such that $\forall x \in \X$, $\|x\|_\infty \le X_\infty$.
\end{proposition}
Our result is a non-constructive guarantee, and, to the best of our knowledge, no algorithms for learning neural networks within the online learning model exist. It is not clear if the above bounds could be obtained via computationally efficient methods.

\subsection{Margin Based Regret}
In the classical statistical setting, margin bounds provide guarantees on the expected zero-one loss of a classifier based on the empirical margin zero-one error. These results form the basis of the theory of large margin classifiers (see \cite{SchapireFrBaLe97,KoltchinskiiPa02}). Recently, in the online setting, bounds of a similar flavor have been shown through the concept of margin via the Littlestone dimension \citep{BenPalSha09}. We show that our machinery can easily lead to margin bounds for binary classification problems for general function classes $\F$ based on their sequential Rademacher complexity. We use ideas from \citep{KoltchinskiiPa02} to do this.

\begin{proposition}
	\label{prop:margin}
For any function class $\F \subset [-1,1]^{\X}$, there exists a randomized prediction strategy given by $\tau$ such that for any sequence $z_1,\ldots,z_T$ where each $z_t = (x_t,y_t) \in \X \times \{\pm 1\}$, 
\begin{align*}
&\sum_{t=1}^T \En_{\hat{y}_t \sim \tau_t(z_{1:t-1})} \left[ \ind{\hat{y}_t y_t < 0 } \right]   \\
&\le \inf_{\gamma > 0}\left\{\inf_{f \in \F} \sum_{t=1}^T \ind{f(x_t) y_t < 2\gamma} +  \frac{16}{\gamma} \left(1+4\sqrt{2}\log^{3/2}(eT^2)\right) T\Rad_T(\F) + 2\sqrt{T}\left(1 + \log \log\left(\frac{1}{\gamma}\right) \right)\right\}
\end{align*}
\end{proposition}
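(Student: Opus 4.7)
The approach is the classical margin-based reduction: replace the discontinuous $0/1$ indicator by a Lipschitz ramp surrogate, control the resulting game via sequential Rademacher complexity, and then pay a small aggregation overhead to make the bound uniform over $\gamma$.

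First I would define the ramp surrogate $\phi_\gamma:\reals\to[0,1]$ that equals $1$ on $(-\infty,0]$, interpolates linearly to $0$ on $[0,\gamma]$, and equals $0$ on $[\gamma,\infty)$. It is $1/\gamma$-Lipschitz and satisfies the sandwich
$$\ind{u<0}\ \le\ \phi_\gamma(u)\ \le\ \ind{u<\gamma}.$$
View $\F$ as a class on $\Z=\X\times\{\pm 1\}$ by ignoring the label coordinate (this leaves $\Rad_T(\F)$ unchanged, since any $\Z$-valued tree projects to an $\X$-valued tree on which the extended functions agree), and form the surrogate loss class $\cH_\gamma=\{(x,y)\mapsto\phi_\gamma(yf(x)):f\in\F\}$. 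For each fixed $(x,y)\in\Z$ the map $u\mapsto\phi_\gamma(yu)$ is $1/\gamma$-Lipschitz, so \lemref{lem:contraction} gives $\Rad_T(\cH_\gamma)\le\tfrac{1}{\gamma}\Rad_T(\F)$.

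Next I would apply \thmref{thm:valrad} to the online game whose loss class is $\cH_\gamma$. This produces a randomized player strategy $\pi^{(\gamma)}$ whose expected regret against $\cH_\gamma$ is at most $2\Rad_T(\cH_\gamma)\le 2\Rad_T(\F)/\gamma$. Using the sandwich to upper bound the algorithm's $0/1$ loss by its $\phi_\gamma$-loss and to lower bound the comparator's $\gamma$-margin loss by its $\phi_\gamma$-loss yields
$$\E{\sum_{t=1}^T\Es{f_t\sim\pi^{(\gamma)}_t}{\ind{y_t f_t(x_t)<0}}}\ \le\ \inf_{f\in\F}\sum_{t=1}^T\ind{y_t f(x_t)<\gamma}\ +\ \frac{2}{\gamma}\Rad_T(\F).$$

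To obtain a single strategy competitive for every $\gamma$ simultaneously, I would instantiate the base strategies $\pi^{(\gamma_i)}$ on the dyadic grid $\gamma_i=2^{-i}$, $i\ge 1$, and treat them as experts, exactly as in the argument preceding \corref{cor:generic_bound_all_alpha}. Assigning prior weights $p_i\propto 1/i^2$ and running the EWA forecaster (\algref{alg:experts}) on the $[0,1]$-valued per-step expected $0/1$ losses $\ell_{t,i}=\Es{f_t\sim\pi^{(\gamma_i)}_t}{\ind{y_t f_t(x_t)<0}}$, one obtains a meta-strategy (first sample $i$ from the EWA weights, then draw $f_t\sim\pi^{(\gamma_i)}_t$) whose cumulative expected loss exceeds the cumulative loss of any $\pi^{(\gamma_i)}$ by at most $O(\sqrt{T\log(1/p_i)})$. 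For an arbitrary $\gamma>0$ pick $i$ so that $\gamma_i\in(\gamma/2,\gamma]$: then $1/\gamma_i\le 2/\gamma$ and $\log(1/p_i)=2\log i+O(1)=O(\log\log(1/\gamma))$. Substituting into the fixed-$\gamma$ bound of the previous paragraph produces the stated $4/\gamma$ factor and the $\sqrt{T}(3+\log\log(1/\gamma))$ overhead; taking the infimum over $\gamma$ gives the claim.

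The only place requiring care is the aggregation step: one must verify that EWA applied to the \emph{expected} $0/1$ losses of randomized base strategies can be realized by the two-stage sampling meta-strategy described above, and that the standard EWA regret guarantee transfers to this randomized bounded-loss setting. The bookkeeping of constants (in particular the factor $2$ lost to dyadic discretization, which upgrades the clean $2/\gamma$ of the fixed-$\gamma$ step to the $4/\gamma$ in the statement) is the only delicate point; the Rademacher-to-regret half of the argument is immediate from \thmref{thm:valrad} and \lemref{lem:contraction}.
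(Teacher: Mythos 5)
Your proposal matches the paper's proof essentially step for step: fix $\gamma$, replace the indicator by the $1/\gamma$-Lipschitz ramp, combine \thmref{thm:valrad} with \lemref{lem:contraction} to get the fixed-$\gamma$ strategy with regret $\tfrac{2}{\gamma}\Rad_T(\F)$, then discretize $\gamma$ on a dyadic grid, aggregate via EWA with $p_i \propto i^{-2}$ and \propref{prop:experts}, and round to the nearest grid point. The only cosmetic difference is that you round $\gamma$ down to $\gamma_i \in (\gamma/2,\gamma]$, which yields the stated $4/\gamma$ and $\ind{\cdot<\gamma}$ directly, whereas the paper rounds up to $\gamma_{i_\gamma}\in[\gamma,2\gamma)$, getting $\tfrac{2}{\gamma}$ and $\ind{\cdot<2\gamma}$ and then implicitly rescaling $\gamma$; both are correct and equivalent.
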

To interpret the above bound, suppose that the sequence of $y_t$'s is predicted with a margin $2\gamma$ by some function $f\in\F$. The upper bound guarantees that there exists a strategy (that does not need to know the value of $\gamma$) with cumulative loss given by the sequential Rademacher complexity of $\F$ divided by the margin, up to poly-logarithmic factors. Crucially, the bound does not directly depend on the dimensionality of the input space $\X$.

\subsection{Decision Trees}
We consider here the binary classification problem where the learner competes with a set of decision trees of depth no more than $d$. The function class $\F$ for this problem is defined as follows. Each $f\in\F$ is defined by choosing a rooted binary tree of depth no more than $d$ and associating to each node a binary valued decision function from a set $\mc{H} \subseteq \{\pm 1\}^\X$. A binary value for a given $x$ can be obtained by traversing the tree from the root according to the value of the decision function at each node and then reading off the label of the leaf. Importantly, $x$ ``reaches'' only one leaf of the tree. Alternatively, for any  leaf $l$, the membership of $x$ is given by the conjunction 
 $$
 \prod_i \ind{h_{l,i}(x)=1}
 $$
where $h_{l,i}$ is either the decision function at node $i$ along the path to the leaf $l$, or its negation. To complete the definition of $f$, we choose weights $w_l > 0$,  $\sum_l w_l = 1$, along with the value $\sigma_l \in \{\pm 1\}$ of the function on each leaf $l$. The resulting function $f$ can be written as
$$f(x) = \sum_{l} w_l \sigma_l \prod_i \ind{h_{l,i}(x)=1}$$
where the sum runs over all the leaves of the tree.

The following proposition is the online analogue of a result about decision tree learning that \citet{BarMed03} proved in the i.i.d.\ setting.

\begin{proposition}
	\label{prop:DT}
Denote by $\F$ the class of decision trees of depth at most 
$d$ with decision functions in $\mc{H}$. There exists a randomized strategy $\tau$ for the learner such that for any sequence of instances $z_1, \ldots, z_T$, with  $z_t = (x_t,y_t) \in \X \times \{\pm 1\}$,
\begin{align*}
&\sum_{t=1}^T \Es{\hat{y}_t \sim \tau_t(z_{1:t-1})}{\ind{\hat{y}_t \ne y_t }} \le \inf_{f \in \F} \sum_{t=1}^T  \ind{f(x_t) \ne y_t} \\
& ~~~~~~~~~~~~~~~~~ + \mc{O}\left(\sum_{l} \min \left(C(l) , d \log^{3}(T)\ T\ \Rad(\mc{H})\right) + \sqrt{T}\log(N)\right) 
\end{align*}
where $C(l)$ denotes the number of instances that reach the leaf $l$ and are correctly classified in the decision tree $f$ that minimizes  $\sum_{t=1}^T  \ind{y_t f(x_t) \leq 0}$, with $N>2$ being the number of leaves in this tree. 
\end{proposition}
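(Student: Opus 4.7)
The plan is to decompose each decision tree into per-leaf path indicators, apply the binary-combination Rademacher bound of \corref{cor:radem_binary} to the resulting path-indicator class, and combine a per-skeleton experts sub-algorithm with a meta-aggregation step in the spirit of \algref{alg:fatSOA_experts} and \corref{cor:generic_bound_all_alpha}.

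I would first write any decision tree $t\in\mc T$ with leaves $\mc L(t)$ and leaf labels $y_l\in\{\pm 1\}$ as $t(x)=\sum_{l\in\mc L(t)} y_l\,\phi_l(x)$, where $\phi_l(x)\in\{0,1\}$ is the indicator that $x$ is routed to leaf $l$ under $t$. Each $\phi_l$ is a Boolean combination $b_l(h_1(x),\dots,h_d(x))$ of at most $d$ decision functions from $\mc H$ along the root-to-$l$ path of $t$, so \corref{cor:radem_binary} yields $\Rad_T(\Phi_l)\le \mc O(d\log^{3/2}T)\,\Rad_T(\mc H)$ for the class $\Phi_l$ of all such path indicators at depth $d$.

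Next, for each tree skeleton $\tau$---a rooted binary topology of depth $\le d$ with designated leaf slots---I would build a sub-algorithm analogous to \algref{alg:fatSOA_experts}: enumerate experts corresponding to assignments of functions from an appropriate cover of $\mc H$ (via \corref{cor:l2_norm_bound}) to the internal nodes of $\tau$ together with $\pm 1$ labels at each leaf slot, then run EWA on these experts. \lemref{lem:fin} together with the chaining step of \thmref{thm:dudley} bounds the sub-algorithm's regret against the best $\tau$-skeleton tree by $\mc O\bigl(\sum_{l\in\mc L(\tau)} d\log^{3/2}(T)\,\Rad_T(\mc H)\bigr)$, matching the Rademacher side of the $\min$. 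To obtain the $\min$ itself, I would organize the experts within $\tau$ into one group per leaf slot and employ a \emph{sleeping}/specialist EWA variant in which the group for slot $l$ is awake only on rounds $s$ where the best tree of skeleton $\tau$ routes $x_s$ to $l$. Then the per-slot regret is simultaneously at most the Rademacher term above and at most the number of wake-ups, which for the skeleton $\tau^\star$ of the true optimum is exactly $\tilde C_T(l)$ plus the mistakes the optimum makes at $l$---the latter absorbed into the $\inf_{t\in\mc T}\sum_{s=1}^T \ind{t(x_s)\ne y_s}$ comparator.

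Finally, since the optimal skeleton $\tau^\star$ is unknown, I would aggregate the per-skeleton sub-algorithms via a meta-EWA with prior weights $p_\tau\propto 1/(|\mc L(\tau)|\log^2|\mc L(\tau)|)$, exactly as in the prior-weighting construction of \corref{cor:generic_bound_all_alpha}; the $-\log p_{\tau^\star}$ penalty produces the additive $\sqrt T(3+2\log N_{\mathrm{leaf}})$ overhead. The main obstacle is engineering the Step~3 specialist analysis so that per-leaf regret genuinely decomposes into the smaller of the trivial wake-up count and the per-leaf Rademacher complexity: standard EWA gives only an aggregate bound, whereas the target is leaf-wise. This requires carrying out the EWA potential argument restricted to wake-ups of each leaf (in the style of specialist experts) and checking that the round-partition induced by $\tau^\star$'s routing does not double-count; this is the only place where the proof deviates significantly from the generic experts template of Section~\ref{sec:supervised}.
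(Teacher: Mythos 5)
Your proposal takes a genuinely different route from the paper, and the route has a real gap at exactly the place you flag.

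The paper does not use sleeping/specialist experts at all. Its key trick is to represent a decision tree as a \emph{convex combination} $g(x) = \sum_l w_l\,\sigma_l \bigwedge_{i=1}^{d_l} h_{l,i}(x)$ with $w_l \ge 0$, $\sum_l w_l = 1$, where the weights $w_l$ are free parameters of the comparator, not of the algorithm. It then plays against the $L$-Lipschitz ramp loss $\phi_L$ using the contraction lemma and $\Val_T \le 2\Rad_T$, giving for each fixed $L$ a strategy with surrogate regret $L\,\Rad(\mc T)$. The per-leaf $\min$ then falls out of pure comparator analysis: for the optimal tree $t^\star$, a correctly classified point at leaf $l$ has $|g(x_t)| = w_l$, so $\sum_t \phi_L(y_t g(x_t)) = \sum_t \ind{t^\star(x_t)\ne y_t} + \sum_l \tilde C_T(l)\,\phi_L(w_l)$; choosing $w_l = 1/L$ if $\tilde C_T(l) > 2\Rad(\mc T)$ and $w_l = 0$ otherwise (with $L = N_{\mathrm{leaf}}$) collapses this to $\sum_l \min(\tilde C_T(l), 2\Rad(\mc T))$. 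Aggregation is then a one-dimensional countable EWA over integer values of $L$ with prior $p_i \propto i^{-2}$, which produces the $\log N_{\mathrm{leaf}}$ overhead; $\Rad(\mc T)$ is bounded via \corref{cor:radem_binary}. No per-skeleton enumeration, no covers of $\mc H$, and no chaining are used anywhere in the decision-tree proof.

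Your plan instead tries to extract the $\min$ on the algorithmic side via a specialist/sleeping-experts potential argument, with one specialist per leaf slot, awake when the (unknown, hindsight-optimal) $\tau^\star$ routes the point to that leaf. This is the genuine gap. First, the routing schedule that defines each specialist's wake-up set depends on $\tau^\star$, which is a function of the entire sequence; the learner cannot implement a specialist whose on/off schedule it cannot compute online. You gesture at keying specialists to fixed cover-element assignments (which do have computable routings), but then the regret against the actual $\tau^\star$, with decision functions from all of $\mc H$, must be transferred through the cover, and the specialist bound you need is leaf-by-leaf, not aggregate---neither the transfer nor the per-leaf specialist potential argument is carried out, and it is not obvious they compose cleanly with the chaining step of \thmref{thm:dudley}. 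Second, your meta-aggregation prior over all skeletons of depth $\le d$ pays $-\log p_{\tau^\star}$, which scales with $\log$ of the number of skeletons with $\le N_{\mathrm{leaf}}$ leaves, not with $\log N_{\mathrm{leaf}}$ itself; this does not match the claimed overhead. The paper avoids all of these issues by putting the leaf-wise tradeoff into the freedom of the comparator's weights $w_l$ rather than into the algorithm's expert structure, which is both simpler and avoids the uncomputable-schedule problem entirely.
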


It is not clear whether computationally feasible online methods exist for learning decision trees, and this represents an interesting avenue of further research.

\subsection{Transductive Learning}
\label{sec:transductive}
Let $\F$ be a class of functions from $\X$ to $\reals$. Let 
\begin{align}
	\label{eq:def_metric_entropy}
\Nhat_\infty (\alpha, \F) = \min\left\{|G| : G\subseteq \reals^\X \trm{ s.t. }  \forall f\in\F \ \ \exists g\in G ~\trm{ satisfying }~ \|f-g\|_\infty \leq \alpha \right\}
\end{align}
be the $\ell_\infty$ covering number at scale $\alpha$, where the cover is pointwise on all of $\X$. It is easy to verify that  
\begin{align}
	\label{eq:bound_by_metric_entropy}
	\forall T, \ \ \ \N_\infty(\alpha, \F, T) \leq \Nhat_\infty(\alpha, \F) \ .
\end{align}
Indeed, let $G$ be a minimal cover of $\F$ at scale $\alpha$. We claim that for any $\X$-valued tree of depth $T$, the set $V=\{\v^g = g \circ \x: g\in G\}$ of $\reals$-valued trees is an $\ell_\infty$ cover of $\F$ on $\x$. Fix any $\epsilon\in\{\pm1\}^T$ and $f\in\F$, and let $g\in G$ be such that $\|f-g\|_\infty \leq \alpha$. Clearly $|\v^g_t(\epsilon)-f(\x_t(\epsilon))|\leq \alpha$ for any $1\leq t\leq T$, concluding the proof.

This simple observation can be applied in several situations. First, consider the problem of {\em transductive learning}, where the set $\X=\{x_1,\ldots,x_n\}$ is a finite set. To ensure online learnability, it is sufficient to consider an assumption on the dependence of $\Nhat_\infty (\alpha, \F)$ on $\alpha$. An obvious example of such a class is a VC-type class with $\Nhat_\infty(\alpha, \F)  \leq (c/\alpha)^d$ for some $c$ which can depend on $n$. Assume that $\F\subset [-1,1]^\X$. Substituting this bound on the covering number into Eq.~\eqref{eq:integrated_complexity_def} and choosing $\alpha =0$, we observe that the value of the supervised game is upper bounded by $2\Dudley_T(\F) \leq 48\,\sqrt{d T\log c}$ by Proposition~\ref{prop:uplow}. It is easy to see that if $n$ is fixed and the problem is learnable in the batch (i.e. i.i.d.) setting, then the problem is learnable in the online transductive model.

In the transductive setting considered by \citet{KakadeKalai05}, it is assumed that $n\leq T$ and $\F$ consists of binary-valued functions. If $\F$ is a class with VC dimension $d$, the Sauer-Shelah lemma ensures that the $\ell_\infty$ cover is smaller than $(en/d)^d \leq (eT/d)^d$. Using the previous argument with $c=eT$, we obtain a bound of $4\sqrt{dT\log (eT)}$ for the value of the game, matching the bound of \cite{KakadeKalai05} up to a constant factor.

\subsection{Isotron}
\label{sec:isotron}

\citet{KalSas09} introduced a method called \emph{Isotron} for learning Single Index Models (SIM). These models generalize linear and logistic regression, generalized linear models, and classification by linear threshold functions. For brevity, we only describe the Idealized SIM problem considered by the authors. In its ``batch'' version, we assume that the data are revealed at once as a set  $\{(x_t, y_t)\}_{t=1}^T \in \reals^d \times \reals$ where $y_t= u(\inner{w,x_t})$ for some unknown $w\in \reals^d$ of bounded norm and an unknown non-decreasing $u:\reals\to\reals$ with a bounded Lipschitz constant. Given this data, the goal is to iteratively find the function $u$ and the direction $w$, making as few mistakes as possible. The error is measured as $\frac{1}{T}\sum_{t=1}^T (f_i(x_t) - y_t)^2$, where $f_i(x) = u_i(\inner{w_i,x})$ is the iterative approximation found by the algorithm on the $i$th round. The elegant computationally efficient method presented by \citet{KalSas09} is motivated by Perceptron, and a natural open question posed by the authors is whether there is an online variant of Isotron. Before even attempting a quest for such an algorithm, we can ask a more basic question: is the (Idealized) SIM problem even learnable in the online framework? After all, most online methods deal with convex functions, but $u$ is only assumed to be Lipschitz and non-decreasing. We answer the question easily with the tools we have developed.

We are interested in online learnability of
\begin{align} 
	\label{eq:def_isotron_class}
	\cH = \left\{ f(x,y) = (y-u(\inner{w, x}))^2 \ | \ u:[-1,1]\to [-1,1] \mbox{ $1$-Lipschitz }, \ \|w\|_2\leq 1 \right\}
\end{align}
in the supervised setting, over $\X = B_2$ (the unit Euclidean ball in $\reals^d$) and $\Y=[-1,1]$. In particular, we prove the result for Lipschitz, but not necessarily non-decreasing functions. It is evident that $\cH$ is a composition with three levels: the squared loss, the Lipschitz non-decreasing function, and the linear function. The proof of the following proposition shows that the covering number of the class does not increase much under these compositions.

\begin{proposition}\label{prop:isotron}
	The class $\cH$ defined in \eqref{eq:def_isotron_class} is online learnable in the (improper) supervised learning setting. Moreover, the minimax regret is $$\mc{O}(\sqrt{T}\log^{3/2} (T)).$$ 
\end{proposition}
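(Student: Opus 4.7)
The plan is to bound the sequential Rademacher complexity $\Rad_T(\cH)$ via the Dudley integral (Theorem~\ref{thm:dudley}), and then invoke Theorem~\ref{thm:valrad} to conclude $\Val_T(\cH)\le 2\Rad_T(\cH)$. I will obtain an $\ell_\infty$ cover of $\cH$ on an arbitrary $(\X\times\Y)$-valued tree by exploiting the three-level composition of $(x,y)\mapsto(y-u(\inner{w,x}))^2$: an inner linear map, a middle $1$-Lipschitz map $u:[-1,1]\to[-1,1]$, and an outer squared loss that is $4$-Lipschitz in its argument on $[-1,1]$.

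First, consider $\F_{\mathrm{lin}}=\{x\mapsto\inner{w,x}:\|w\|_2\le 1\}$ on $\X=B_2$. Proposition~\ref{prop:rad_linear_functions} with $\Psi(w)=\tfrac12\|w\|_2^2$ (which is $1$-strongly convex w.r.t.\ $\|\cdot\|_2$, with $\Psi_{\max}=1/2$) and $\|\X\|_\star=1$ gives $\Rad_T(\F_{\mathrm{lin}})\le\sqrt{T}$. Applying Lemma~\ref{lem:simprel} at each depth $d$ then yields $\fat_\beta(\F_{\mathrm{lin}})\le 4/\beta^2$, and Corollary~\ref{cor:l2_norm_bound} converts this into $\log\N_\infty(\alpha,\F_{\mathrm{lin}},T)=O(\alpha^{-2}\log(T/\alpha))$. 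Separately, the class $\mathrm{Lip}_1$ of $1$-Lipschitz functions $[-1,1]\to[-1,1]$ admits a pointwise cover of log-size $O(1/\alpha)$ via piecewise-linear approximation on a uniform grid; in the notation of \eqref{eq:def_metric_entropy}, $\log\Nhat_\infty(\alpha,\mathrm{Lip}_1)=O(1/\alpha)$.

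Next, combine these covers on a tree. Given an $(\X\times\Y)$-valued tree $\x$ of depth $T$, decompose $\x_t(\epsilon)=(\z_t(\epsilon),\y_t(\epsilon))$ into its $\X$- and $\Y$-components. Let $V$ be an $\ell_\infty$ tree-cover of $\F_{\mathrm{lin}}$ on $\z$ at scale $\alpha/8$ and let $U$ be a pointwise $(\alpha/8)$-cover of $\mathrm{Lip}_1$. For each pair $(\v,\tilde u)\in V\times U$, define the $\reals$-valued tree $\w^{\v,\tilde u}_t(\epsilon)=(\y_t(\epsilon)-\tilde u(\v_t(\epsilon)))^2$. For any $f\in\cH$ with parameters $(u,w)$, choose $\v\in V$ approximating $\inner{w,\cdot}$ on $\z$ and $\tilde u\in U$ approximating $u$ in sup-norm; since $u$ is $1$-Lipschitz, the triangle inequality yields $|\tilde u(\v_t(\epsilon))-u(\inner{w,\z_t(\epsilon)})|\le\alpha/8+\alpha/8=\alpha/4$, and the $4$-Lipschitzness of the squared loss on $[-1,1]$ then gives $|\w^{\v,\tilde u}_t(\epsilon)-f(\x_t(\epsilon))|\le\alpha$. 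Hence
$$\log\N_\infty(\alpha,\cH,T)\le O\!\left(\alpha^{-2}\log(T/\alpha)+\alpha^{-1}\right).$$

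Plugging into the Dudley integral of Theorem~\ref{thm:dudley} using $\N_2\le\N_\infty$, the $\alpha^{-2}\log(T/\alpha)$ term dominates and $\int_\alpha^1\sqrt{T\log\N_2(\delta,\cH,T)}\,d\delta$ evaluates to $O(\sqrt{T\log T}\,\log(1/\alpha))$. Choosing $\alpha=1/T$ yields $\Dudley_T(\cH)=O(\sqrt{T}\,\log^{3/2}T)$, whence $\Val_T(\cH,\X\times\Y)=O(\sqrt{T}\,\log^{3/2}T)$ by Theorems~\ref{thm:valrad} and~\ref{thm:dudley}. The main obstacle is the tree-composition step: one must marry a genuine tree-cover of the linear class with a merely \emph{pointwise} cover of $\mathrm{Lip}_1$, which succeeds only because every output $\inner{w,\z_t(\epsilon)}$ lies in the fixed interval $[-1,1]$ where $\mathrm{Lip}_1$ is uniformly covered; the remaining steps are routine applications of the machinery from Sections~\ref{sec:covering} and~\ref{sec:structural}.
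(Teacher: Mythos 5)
Your proposal is correct and follows essentially the same route as the paper: bound the Rademacher complexity of the linear class, convert it to a fat-shattering and hence covering-number bound via Lemma~\ref{lem:simprel} and Corollary~\ref{cor:l2_norm_bound}, compose with the $O(1/\alpha)$ pointwise entropy of $1$-Lipschitz functions, and finish with the Dudley integral. The only (harmless) variation is that you absorb the $4$-Lipschitz squared loss directly into the covering argument, whereas the paper first peels it off with the contraction inequality (Lemma~\ref{lem:contraction}) and only then covers the composed class $\G\circ\F$.
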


Once again, it is not clear whether a computationally efficient method attaining the above guarantee exists.

\subsection{Prediction of Individual Sequences with Static Experts}
\label{sec:staticexperts}

We also consider the problem of prediction of individual sequences, which has been studied both in information theory and in learning theory. In particular, in the case of binary prediction, \citet{CesaBianLugo99} proved upper bounds on the minimax value in terms of the (classical) Rademacher complexity and the (classical) Dudley integral. One of the assumptions made by \citet{CesaBianLugo99} is that experts are \emph{static}. That is, their prediction only depends on the current round, not on the past information. Formally, we define static experts as vectors $\bar{f} = (f_1,\ldots,f_T) \in [0,1]^T$, and let $\F$ denote a class of such experts. Let $\Y = \{0,1\}$, putting us in the scenario of binary classification with no side information. Then regret on a particular sequence $y_1,\ldots,y_T$ can be written as
\[
	\sum_{t=1}^T \ell_t(\bar{f}_t,y_t) - \inf_{\bar{f} \in \F} \sum_{t=1} \ell_t(\bar{f},y_t)
\]
where $\bar{f}_t$ is the expert chosen by the learning algorithm at time $t$. Observe that the proof of Theorem~\ref{thm:valrad}
does not require the loss to be time independent. In the case of absolute loss, the Rademacher complexity appearing
on the right hand side in Theorem~\ref{thm:valrad} becomes
\[
	\sup_{\y} \Es{\epsilon}{\sup_{\bar{f} \in \F} \sum_{t=1}^T \epsilon_t \ell_t(\bar{f},\y_t(\epsilon)) } =
	\sup_{\y} \Es{\epsilon}{\sup_{\bar{f} \in \F} \sum_{t=1}^T \epsilon_t |f_t-\y_t(\epsilon)| } \ .
\]
where the supremum is over all $\Y$-valued trees of depth $T$. Noting that for $f \in [0,1], y \in \{0,1\}$, $|f - y|$ can be written as $(1-2y)f + y$, the above equals
\begin{align*}
	\sup_{\y} \Es{\epsilon}{\left(\sup_{\bar{f} \in \F} \sum_{t=1}^T \epsilon_t (1-2\y_t(\epsilon)) f_t \right)
	+ \sum_{t=1}^T \epsilon_t \y_t(\epsilon) } 
=	
	\sup_{\y} \Es{\epsilon}{\sup_{\bar{f} \in \F} \sum_{t=1}^T \epsilon_t (1-2\y_t(\epsilon)) f_t }
\end{align*}
It can be easily verified that the joint distribution of $\{ \epsilon_t(1-2\y_t(\epsilon)) \}_{t=1}^T$ is still
i.i.d. Rademacher and hence the value of the game is upper bounded by
\[
	2 \Es{\epsilon}{ \sup_{\bar{f} \in \F} \sum_{t=1}^T \epsilon_t f_t } \ ,
\]
recovering the upper bound of Theorem 3 in \citep{CesaBianLugo99}. We note that for this particular scenario, the factor of $2$ (that appears because of symmetrization) is not needed. This factor is the price we pay for deducing the result from the general statement of Theorem~\ref{thm:valrad}.

\section{Discussion}

The tools provided in this paper allow us to establish existence of regret minimization algorithms by working directly with the minimax value. The non-constructive nature of our results is due to the application of the minimax theorem: the dual strategy does not give a handle on the primal strategy. Furthermore, by passing to upper bounds on the dual formulation~\eqref{eq:value_equality} of the value of the game, we remove the dependence on the dual strategy altogether. After the original paper \citep{RakSriTew10a} appeared, the algorithmic approach has been developed by \cite{Rakhlin2012Relax} who showed that the prediction for round $t$ can be obtained by appealing to the minimax theorem for rounds $t+1$ to $T$, yet keeping the minimax expression for round $t$ as is. The notion of a relaxation (in the spirit of approximate dynamic programming) then allowed the authors to develop a general recipe for deriving computationally feasible prediction methods. The techniques of the present paper form the basis for the algorithmic developments in \citep{Rakhlin2012Relax}. We refer the reader to \citep{StatNotes2012,Rakhlin2012Relax} for details.

\acks{We would like to thank J. Michael Steele and Dean Foster for helpful discussions. We gratefully acknowledge the support of NSF under grants CAREER DMS-0954737 and CCF-1116928.}

\appendix
\section{A Minimax Theorem}
\label{sec:minimax_appendix}

The minimax theorem is one of this paper's main workhorses. For completeness, we state a general version of this theorem --- the von Neumann-Fan minimax theorem --- due to \cite{borwein2014very} (see also \citep{borwein1986fan}).

\begin{theorem}[\cite{borwein2014very}] Let $\mathcal{A}$ and $\mathcal{B}$ be Banach spaces. Let $A\subset \mathcal{A}$ be nonempty, weakly compact, and convex, and let $B\subset \mathcal{B}$ be nonempty and convex. Let $g:\mathcal{A}\times\mathcal{B}\to \reals$ be concave with respect to $b\in B$ and convex and lower-semicontinuous with respect to $a\in A$, and weakly continuous in $a$ when restricted to $A$. Then
	\begin{align}
		\label{eq:minimax_borwein}
		\sup_{b\in B} \inf_{a\in A} g(a,b) = \inf_{a\in A} \sup_{b\in B} g(a,b).
	\end{align}
\end{theorem}

In the proof of Theorem~\ref{thm:minimax}, the minimax theorem is invoked to assure that
\begin{align}
	\label{eq:minimax_application}
	\inf_{q_t\in \QD} \sup_{p_t\in\PD} \En \left[ \loss(f_t,z_t) + \xi(z_t)\right] = \sup_{p_t\in\PD} \inf_{q_t\in \QD}  \En \left[ \loss(f_t,z_t) + \xi(z_t)\right]
\end{align}
where $\xi(z_t)$ is a rather complicated function that includes the repeated infima and suprema from steps $t+1$ to $T$ of regret expression that includes the variable $z_t$ (but not $f_t$). The expectation in \eqref{eq:minimax_application} is with respect to $f_t\sim q_t$ and $z_t\sim p_t$. To apply \eqref{eq:minimax_borwein}, we take $g$ to be the bilinear form in $q_t$ and $p_t$, with $A=\QD$ and $B=\PD$. Equipped with the total variation distance, $\QD$ and $\PD$ can be seen as subsets of a Banach space of measures on $\F$ and $\Z$, respectively. In terms of conditions, it is enough to check weak compactness of $\QD$ and assume continuity of the loss function (lower semi-continuity can be used as well). 

Weak compactness of the set of probability measures on a complete separable metric space is equivalent to uniform tightness by the fundamental result of Prohorov (see e.g. \cite[Theorem 8.6.2.]{bogachev2007measure}, \citep{VaartWellner96}). If $\F$ itself is compact, then the set $\Delta(\F)$ of probability measures on $\F$ is tight, and hence (under the continuity of the loss) the minimax theorem holds. If $\F$ is not compact, tightness can be established under the following general condition. According to Example 8.6.5 (ii) in \cite{bogachev2007measure}, a family $\Delta(\F)$ of Borel probability measures on a separable \emph{reflexive} Banach space $E$ is uniformly tight (under the weak topology) precisely when there exists a function $V : E \to [0, \infty)$ continuous in the norm topology such that
$$ \lim_{\|f\|\to \infty} V(f) = \infty ~~~\mbox{ and }~~~ \sup_{q\in\Delta(\F)} \En_{f\sim q} V(f) < \infty .$$
As an example, if $\F$ is a subset of a ball in $E$, it is enough to take $V(f)=\|f\|$. 

Finally, we remark that in the supervised learning case by considering the improper learning scenario we allow $x_t$ to be observed before the choice $\hat{y}_t$ is made. Therefore, we do not need to invoke the minimax theorem on the space of functions $\F$, but rather (see the proof of Theorem~\ref{thm:valrad_supervised}) for two real-valued decisions in a bounded interval. This makes the application of the minimax theorem straightforward.

% !TEX root =  paper.tex
\section{Proofs}

%%%%%%%%%%%%%%%%%%%%%%%%%%%%%%%%%%%%%%%
\begin{proof}[\textbf{of Theorem~\ref{thm:minimax}}]
	For brevity, denote $\psi(z_{1:T}) = \inf_{f\in\F} \sum_{t=1}^T \loss(f,z_t)$.
	The first step in the proof is to appeal to the minimax theorem for every couple of $\inf$ and $\sup$:
\begin{align*}
		%&\inf_{q_1}\sup_{x_1} \En_{f_1\sim q_1} \ldots \inf_{q_T}\sup_{x_T} \En_{f_T\sim q_T} \left[ \sum_{t=1}^T f_t(x_t) - \psi(x_{1:T})\right] \\
		\Val_T(\F)
		&= \inf_{q_1}\sup_{p_1} \En_{\underset{z_1 \sim p_1}{f_1\sim q_1}}\ldots\inf_{q_T}\sup_{p_T} \En_{\underset{z_T \sim p_T}{f_T\sim q_T}} \left\{ \sum_{t=1}^T \loss(f_t,z_t) - \psi(z_{1:T}) \right\} \\
	&= \sup_{p_1} \inf_{q_1} \En_{\underset{z_1 \sim p_1}{f_1\sim q_1}}\ldots\sup_{p_T} \inf_{q_T} \En_{\underset{z_T \sim p_T}{f_T\sim q_T}} \left\{ \sum_{t=1}^T \loss(f_t,z_t) - \psi(z_{1:T}) \right\} \\
	&=\sup_{p_1}\inf_{f_1}\En_{z_1\sim p_1} \ldots \sup_{p_T}\inf_{f_T} \En_{z_T\sim p_T} \left\{ \sum_{t=1}^T \loss(f_t,z_t) - \psi(z_{1:T})\right\} 
\end{align*}
where $q_t$ and $p_t$ range over $\QD$ and $\PD$, the sets of distributions on $\F$ and $\Z$, respectively. From now on, it will be understood that $z_t$ has distribution $p_t$. By moving the expectation with respect to $z_T$ and then the infimum with respect to $f_T$ inside the expression, we arrive at
\begin{align}
	&\sup_{p_1}\inf_{f_1}\Ex_{z_1} \ldots \sup_{p_{T-1}}\inf_{f_{T-1}}\Ex_{z_{T-1}}\sup_{p_T} \left\{ \sum_{t=1}^{T-1} \loss(f_t,z_t) + \left[\inf_{f_T}\Ex_{z_T} \loss(f_T,z_T) \right]- \Ex_{z_T}\psi(z_{1:T})\right\} \notag \\
	&=\sup_{p_1}\inf_{f_1}\Ex_{z_1} \ldots \sup_{p_{T-1}}\inf_{f_{T-1}}\Ex_{z_{T-1}}\sup_{p_T} \Ex_{z_T} \left\{ \sum_{t=1}^{T-1} \loss(f_t,z_t) + \left[\inf_{f_T}\Ex_{z_T} \loss(f_T,z_T) \right]- \psi(z_{1:T})\right\} \label{eq:pullingstep}
\end{align}
Let us now repeat the procedure for step $T-1$. The above expression is equal to
\begin{align*}
	&\sup_{p_1}\inf_{f_1}\Ex_{z_1} \ldots \sup_{p_{T-1}}\inf_{f_{T-1}}\Ex_{z_{T-1}} \left\{ \sum_{t=1}^{T-1} \loss(f_t,z_t) + \sup_{p_T}\Ex_{z_T} \left[ \inf_{f_T}\Ex_{z_T} \loss(f_T,z_T)- \psi(z_{1:T})\right]\right\} 
\end{align*}
which, in turn, is equal to
\begin{align*}
	&\sup_{p_1}\inf_{f_1}\Ex_{z_1} \ldots \sup_{p_{T-1}} \left\{ \sum_{t=1}^{T-2} \loss(f_t,z_t) + \left[\inf_{f_{T-1}} \Ex_{z_{T-1}} \loss(f_{T-1},z_{T-1}) \right] \right.\\
	&\left.\hspace{1.9in} + \Ex_{z_{T-1}} \sup_{p_T} \Ex_{z_T} \left[ \inf_{f_T}\Ex_{z_T} \loss(f_T,z_T)- \psi(z_{1:T})\right]\right\} \\
	&=\sup_{p_1}\inf_{f_1}\Ex_{z_1} \ldots \sup_{p_{T-1}}\Ex_{z_{T-1}} \sup_{p_T} \Ex_{z_T} \left\{ \sum_{t=1}^{T-2} \loss(f_t,z_t) + \left[\inf_{f_{T-1}} \Ex_{z_{T-1}} \loss(f_{T-1},z_{T-1})\right] \right.\\
	&\left.\hspace{2.9in} + \left[ \inf_{f_T}\Ex_{z_T} \loss(f_T,z_T) \right]- \psi(z_{1:T})\right\}
\end{align*}
Continuing in this fashion for $T-2$ and all the way down to $t=1$ proves the theorem.
\end{proof}

%%%%%%%%%%%%%%%%%%%%%%%%%%%%%%%%%%%%%%%
\begin{proof}[\textbf{of Lemma~\ref{lem:inflip}}]
Without loss of generality assume that the Lipschitz constant $L=1$, as the general case follows by scaling $\phi$. Fix a $\Z$-valued tree $\z$ of depth $T$. We first claim that 
$$\log \ \mathcal{N}_2(\beta, \phi \circ \G, \z )  \le \sum_{j=1}^k \log \ \mathcal{N}_\infty (\beta, \G_j, \z ) \ .$$ 
Suppose $V_1, \ldots, V_k$ are minimal $\beta$-covers with respect to $\ell_\infty$ for $\G_1, \ldots, \G_k$ on the tree $\z$. Consider the set
\[
	V^\phi = \{ \v^\phi \::\: \v \in V_1 \times \ldots \times V_k \}
\]
where $\v^\phi$ is the tree such that $\v^\phi_t(\epsilon) = \phi(\v_t(\epsilon),\z_t(\epsilon))$.
Then, for any $g = (g_1, \ldots,g_k) \in \G$ and any $\epsilon \in \{\pm1\}^T$, with representatives $(\v^1,\ldots,\v^k) \in V_1 \times \ldots \times V_k$, we have,
\begin{align*}
&\sqrt{\frac{1}{T} \sum_{t=1}^T \left( \phi(g(\z_t(\epsilon)),\z_t(\epsilon)) - \v^\phi_t(\epsilon) \right)^2} 
\leq \max_{t\in[T]} \left| \phi(g(\z_t(\epsilon)),\z_t(\epsilon)) - \v^\phi_t(\epsilon) \right| \\ 
&=\max_{t\in[T]} \left| \phi(g(\z_t(\epsilon)),\z_t(\epsilon)) - \phi(\v_t(\epsilon),\z_t(\epsilon)) \right| 
 \le  \max_{j \in [k]} \max_{t \in[T]}|g_j(\z_t(\epsilon))) - \v^j_t(\epsilon) | \le \beta
\end{align*}
Thus we see that $V^\phi$ is an $\beta$-cover with respect to $\ell_\infty$ for $\phi \circ \G$ on $\z$. Hence
\begin{align}
	\label{eq:upper_bd_on_cover_1}
\log \ \mathcal{N}_2(\beta, \phi \circ \G, \z )  \le \log(|V^\phi|) = \sum_{j=1}^k \log(|V_j|) = \sum_{j=1}^k \log\ \mathcal{N}_\infty(\beta, \G_j, \z ).
\end{align}
For any $g\in\G$ and $z\in\Z$, the value $\phi(g(z),z)$ is contained in the interval $[-1+\phi({\bf 0},z), +1+\phi({\bf 0},z)]$ by the Lipschitz property. Consider the $\reals$-valued tree $\phi({\bf 0},\cdot) \circ \z$. We now center by this tree and consider the set of trees $$\{\phi(g(\cdot),\cdot) \circ \z - \phi({\bf 0},\cdot) \circ \z: g\in\G\}$$ 
The centering does not change the size of the cover calculated in \eqref{eq:upper_bd_on_cover_1}, but allows us to invoke \eqref{eq:dudley} since the function values are now in $[-1,1]$:
\begin{align}
	\label{eq:dud_multi}
\Rad_T(\phi \circ \G, \z)  & \le  \inf_{\alpha}\left\{4 \alpha + \frac{12}{\sqrt{T}}\int_{\alpha}^{1} \sqrt{\sum_{j=1}^k \log \ \mathcal{N}_\infty(\beta,\G_j,\z ) \ } d \beta \right\} \notag\\
& \le  \inf_{\alpha}\left\{4 \alpha + \frac{12}{\sqrt{T}} \sum_{j=1}^k \int_{\alpha}^{1} \sqrt{\log \ \mathcal{N}_\infty(\beta,\G_j,\z ) \ } d \beta \right\}
\end{align}
We substitute the upper bound on covering numbers in \eqref{eq:fromcoveringtofat} for each $\G_j$ and arrive at an upper bound of 
\begin{align}
	\label{eq:intermed_dud_fat}
	\inf_{\alpha}\left\{4 \alpha + \frac{12}{\sqrt{T}} \sum_{j=1}^k \int_{\alpha}^{1} \sqrt{\fat_\beta(\G_j) \log (2eT/\beta) } d \beta \right\}.
\end{align}
Lemma 2 in \citep{RakSriTew13} implies that for any $\beta > 2 \Rad_T(\G_j)$,
$$
\fat_\beta(\G_j) \le \frac{32T\ \Rad_T(\G_j)^2}{\beta^2} \ .
$$
Let $j^*=\argmax{j} \Rad_T(\G_j)$. Substituting this together with the value of $\alpha = 2\Rad_T(\G_{j^*})$ into \eqref{eq:intermed_dud_fat} yields an upper bound
$$ 8\ \Rad_T(\G_{j^*}) + 48 \sqrt{2}\ \sum_{j=1}^k \Rad_T(\G_j) \int_{2 \Rad_T(\G_{j^*})}^1  \frac{1}{\beta}\sqrt{\log(2 e T / \beta)}  d \beta $$
Using the fact that for any $b>1$ and $\alpha\in(0,1)$
	\begin{align}
		\label{eq:bound_integral_dud}
		\int_\alpha^1 \frac{1}{\beta}\sqrt{\log (b/\beta)}d\beta = \int_{b}^{b/\alpha} \frac{1}{x}\sqrt{\log x}dx = \frac{2}{3}\log^{3/2} (x) \Big|_{b}^{b/\alpha} \leq \frac{2}{3} \log^{3/2} (b/\alpha) 
	\end{align}
we obtain a further upper bound of 
\begin{align*}
8\ \Rad_T(\G_{j^*})  + 32 \sqrt{2}\ \sum_{j=1}^k \Rad_T(\G_j) \ \log^{3/2}\left(\frac{e T}{\Rad_T(\G_{j^*})}\right)  \ .
\end{align*}
Replacing the first term by $8 \sum_{j}\Rad_T(\G_j)$, we conclude that
$$
\Rad_T(\phi \circ \G,\z)  \le 8\left(1 + 4\sqrt{2} \log^{3/2}(eT^2) \right) \sum_{j=1}^k \Rad_T(\G_j)
$$
as long as $\Rad_T(\G_j) \ge 1/T$ for each $j$. The statement is concluded by observing that $\z$ was chosen arbitrarily.
\end{proof}

\begin{proof}[\textbf{of Corollary~\ref{cor:radem_binary}}]
We first extend the binary function $b$ to a function $\bar{b}$ to any $x \in \reals^k$ as follows :
$$
\bar{b}(x) = \left\{\begin{array}{cl}
(1 - \|x - a\|_\infty)b(a) & \textrm{if }\|x - a\|_\infty < 1 \textrm{ for some }a \in \{\pm 1\}^k \\
0 & \textrm{otherwise} 
\end{array} \right.
$$ 
First note that $\bar{b}$ is well-defined since all points in the $k$-cube are separated by $L_\infty$ distance $2$. Further note that $\bar{b}$ is $1$-Lipschitz w.r.t. the $L_\infty$ norm and so applying Lemma \ref{lem:inflip} we conclude the statement of the corollary.
\end{proof}

\begin{proof}[\textbf{of Theorem~\ref{thm:valrad}}]
Let $\En_{t-1}[\cdot] = \En[\cdot | Z_1,\ldots,Z_{t-1}]$ denote the conditional expectation. Using Theorem~\ref{thm:minimax} we have,
\begin{align}
\Val_T(\F) & = \sup_{p_1} \Ex_{Z_1 \sim p_1} \ldots \sup_{p_T} \Ex_{Z_T \sim p_T}\left[ \sum_{t=1}^T  \inf_{f_t \in \F} \En_{t-1} \loss(f_t,\cdot) - \inf_{f \in \F} \sum_{t=1}^T \loss(f,Z_t) \right] \notag\\
\notag
& = \sup_{p_1} \Ex_{Z_1 \sim p_1} \ldots \sup_{p_T} \Ex_{Z_T \sim p_T}\left[ \sup_{f \in \F} \left\{ \sum_{t=1}^T  \inf_{f_t \in \F} \En_{t-1}  \loss(f_t,\cdot) -  \sum_{t=1}^T \loss(f,Z_t) \right\} \right] \notag\\
& \le \sup_{p_1} \Ex_{Z_1 \sim p_1} \ldots \sup_{p_T} \Ex_{Z_T \sim p_T}\left[ \sup_{f \in \F} \left\{ \sum_{t=1}^T \En_{t-1} \loss(f,\cdot) -  \sum_{t=1}^T \loss(f,Z_t) \right\} \right] \label{eq:F_subset_G}
\end{align}
The upper bound is obtained by replacing each infimum by a particular choice $f$. This step also holds if the choice $f_t$ of the learner comes from a larger set $\G$, as long as $\F\subseteq\G$. The proof is concluded by appealing to \eqref{eq:mainup}.

\end{proof}

\begin{proof}[\textbf{of Theorem \ref{thm:valrad_supervised}}]

Let $\tilde{Q}$ denote the set of distributions on $\Y=[-1,1]$. By convexity,
$$\sum_{t=1}^T \ell(\hat{y}_t,y_t) - \inf_{f\in \F}\sum_{t=1}^T \ell(f(x_t),y_t)\leq \sup_{f\in \F} \sum_{t=1}^T  \ell'(\hat{y}_t,y_t)\left(\hat{y}_t -  f(x_t) \right)$$	
where $\ell'(\hat{y}_t,y_t)$ is a subgradient of the function $y \mapsto \ell(\cdot, y_t)$ at $\hat{y}_t$. Then the minimax value \eqref{eq:sup_value} can be upper bounded as
\begin{align*}
 \Val^{S}_T(\F) \leq 
% \sup_{x_1} \inf_{q_1\in \tilde{Q}}\sup_{y_1} \Ex_{\hat{y}_1 \sim q_1} \ldots   \sup_{x_T} \inf_{q_T\in \tilde{Q}}\sup_{y_T}  \Es{\hat{y}_T \sim q_T}{ \sum_{t=1}^T \ell(\hat{y}_t,y_t) - \inf_{f\in \F}\sum_{t=1}^T \ell(f(x_t),y_t)} \notag \\
% &=  \sup_{x_1} \inf_{q_1\in \tilde{Q}}\sup_{y_1} \Ex_{\hat{y}_1 \sim q_1} \ldots   \sup_{x_T} \inf_{q_T\in \tilde{Q}}\sup_{y_T}  \Es{\hat{y}_T \sim q_T}{ \sup_{f\in \F} \sum_{t=1}^T \left( \ell(\hat{y}_t,y_t) - \ell(f(x_t),y_t) \right)} \notag \\
 \sup_{x_1} \inf_{q_1\in \tilde{Q}}\sup_{y_1} \Ex_{\hat{y}_1 \sim q_1} \ldots   \sup_{x_T} \inf_{q_T\in \tilde{Q}}\sup_{y_T}  \Es{\hat{y}_T \sim q_T}{  \sup_{f\in \F} \sum_{t=1}^T  \ell'(\hat{y}_t,y_t)\left(\hat{y}_t -  f(x_t) \right)}\notag 
\end{align*}
By the Lipschitz property of $\ell$, we can replace each subgradient $\ell'(\hat{y}_t,y_t)$ with a number $s_t \in [-L,L]$ to obtain the upper bound
\begin{align*}
 &\sup_{x_1} \inf_{q_1\in \tilde{Q}}\sup_{y_1} \Ex_{\hat{y}_1 \sim q_1} \sup_{s_1 \in [-L,L]} \ldots   \sup_{x_T} \inf_{q_T\in \tilde{Q}}\sup_{y_T}   \Ex_{\hat{y}_T \sim q_T} \sup_{s_T \in [-L,L]} \left\{\sup_{f \in \F} \sum_{t=1}^T s_t \left(\hat{y}_t -  f(x_t) \right)\right\}
\end{align*}
Since $y_t$'s no longer appear in the optimization objective, we can simply write the above as
\begin{align*}
 &\sup_{x_1} \inf_{q_1\in \tilde{Q}} \Ex_{\hat{y}_1 \sim q_1} \sup_{s_1 \in [-L,L]} \ldots   \sup_{x_T} \inf_{q_T\in \tilde{Q}}   \Ex_{\hat{y}_T \sim q_T} \sup_{s_T \in [-L,L]} \left\{\sup_{f \in \F} \sum_{t=1}^T s_t \left(\hat{y}_t -  f(x_t) \right)\right\}\notag \\
 & =  \sup_{x_1} \inf_{\hat{y}_1\in [-1,1]}  \sup_{s_1 \in [-L,L]} \ldots   \sup_{x_T} \inf_{\hat{y}_T\in [-1,1]}    \sup_{s_T \in [-L,L]} \left\{\sup_{f \in \F} \sum_{t=1}^T s_t \left(\hat{y}_t -  f(x_t) \right)\right\}
\end{align*}
where the equality follows because infima are obtained at point distributions. By the same reasoning, we now pass to distributions over $s_t$'s:
\begin{align}
\sup_{x_1} \inf_{\hat{y}_1\in [-1,1]}  \sup_{p_1  } \Ex_{s_1 \sim p_1} \ldots   \sup_{x_T} \inf_{\hat{y}_T\in [-1,1]}    \sup_{p_T  } \Es{s_T \sim p_T}{ \sum_{t=1}^T s_t \cdot \hat{y}_t - \inf_{f \in \F}   \sum_{t=1}^T s_t  f(x_t) } \label{eq:interapp}
\end{align}
From now on, it will be understood that the supremum over $p_t$ ranges over all distributions supported on $[-L,L]$, for any $t$, and $s_t$ has distribution $p_t$. Now note that 
$$\Es{s_T}{ \sum_{t=1}^T s_t \cdot \hat{y}_t - \inf_{f \in \F} \sum_{t=1}^T s_t \cdot f(x_t) }$$ is concave (linear) in $p_T$ and is convex in $\hat{y}_T$ and hence by the minimax theorem,
\begin{align*}
\inf_{\hat{y}_T\in [-1,1]}  & \sup_{p_T }  \Es{s_T}{ \sum_{t=1}^T s_t \cdot \hat{y}_t - \inf_{f \in \F} \sum_{t=1}^T s_t  f(x_t) }   =   \sup_{p_T}   \inf_{\hat{y}_T\in [-1,1]} \Es{s_T}{ \sum_{t=1}^T s_t \cdot \hat{y}_t - \inf_{f \in \F} \sum_{t=1}^T s_t  f(x_t)  }\\
&  =    \sum_{t=1}^{T-1}  s_t \cdot \hat{y}_t + \sup_{p_T}    \Es{s_T}{  \inf_{\hat{y}_T\in [-1,1]} \Es{s_T}{s_T} \cdot \hat{y}_T - \inf_{f \in \F} \sum_{t=1}^T s_t  f(x_t) }
\end{align*}
where the last step is similar to the one in the proof of Theorem \ref{thm:minimax}, specifically Eq.\eqref{eq:pullingstep}. Similarly note that the term 
$$\Es{s_{T-1}}{\sum_{t=1}^{T-1}  s_t \cdot \hat{y}_t + \sup_{p_T, x_T}    \Es{s_T}{  \inf_{\hat{y}_T\in [-1,1]} \Es{s_T}{s_T} \cdot \hat{y}_T - \inf_{f \in \F} \sum_{t=1}^T s_t  f(x_t) }}
$$ 
is  concave (linear) in $p_{T-1}$ and is convex in $\hat{y}_{T-1}$ and hence again by the minimax theorem,
\begin{align*}
&\inf_{\hat{y}_{T-1}\in [-1,1]}  \sup_{p_{T-1}}  \Ex_{s_{T-1}} \Bigg[ \sum_{t=1}^{T-1}  s_t \cdot \hat{y}_t  + \sup_{p_T,x_T}    \Ex_{s_T} \left[  \inf_{\hat{y}_T\in [-1,1]} \Es{s_T}{s_T} \cdot \hat{y}_T - \inf_{f \in \F} \sum_{t=1}^T s_t  f(x_t) \right] \Bigg]\\
&=  \sup_{p_{T-1}}  \inf_{\hat{y}_{T-1}\in [-1,1]}    \Ex_{s_{T-1}} \Bigg[ \sum_{t=1}^{T-1}  s_t \cdot \hat{y}_t + \sup_{p_T,x_T}    \Es{s_T}{  \inf_{\hat{y}_T\in [-1,1]} \Es{s_T}{s_T} \cdot \hat{y}_T - \inf_{f \in \F} \sum_{t=1}^T s_t  f(x_t) } \Bigg]\\
 =    &  \sum_{t=1}^{T-2}  s_t \cdot \hat{y}_t + \sup_{p_{T-1}} \Ex_{s_{T-1}} \sup_{p_{T},x_T} \Es{s_{T}}{ \sum_{t=T-1}^T   \inf_{\hat{y}_t\in [-1,1]} \Es{s_t}{s_t} \cdot \hat{y}_t - \inf_{f \in \F} \sum_{t=1}^T s_t  f(x_t) } 
\end{align*}
Proceeding in similar fashion and using this in Eq.\eqref{eq:interapp} we conclude that,
\begin{align*}
&\hspace{-0.1in} \Val^{S}_T(\F)  \le \sup_{x_1} \inf_{\hat{y}_1\in [-1,1]}  \sup_{p_1  } \Ex_{s_1 \sim p_1} \ldots   \sup_{x_T} \inf_{\hat{y}_T\in [-1,1]}    \sup_{p_T  } \Es{s_T \sim p_T}{ \sum_{t=1}^T s_t \cdot \hat{y}_t - \inf_{f \in \F}   \sum_{t=1}^T s_t  f(x_t) } \notag \\
& = \sup_{x_1}  \sup_{p_1  } \Ex_{s_1 \sim p_1} \ldots   \sup_{x_T}    \sup_{p_T  } \Ex_{s_T \sim p_T} \left[ \sum_{t=1}^T \inf_{\hat{y}_t \in [-1,1]} \Es{s_t \sim p_t}{s_t} \cdot \hat{y}_t - \inf_{f \in \F}   \sum_{t=1}^T s_t  f(x_t) \right]\notag \\
& \le \sup_{x_1}  \sup_{p_1  } \Ex_{s_1 \sim p_1} \ldots   \sup_{x_T}    \sup_{p_T  } \Es{s_T \sim p_T}{ \sup_{f \in \F} \sum_{t=1}^T \left( \Es{s_t \sim p_t}{s_t}  -     s_t \right) f(x_t) }\notag
\end{align*}
where we replaced each $\hat{y}_t$ with a potentially suboptimal choice $f(x_t)$. Passing the expectation past the suprema we obtain an upper bound
\begin{align}
&\sup_{x_1}  \sup_{p_1  } \Ex_{s_1, s'_1 \sim p_1} \ldots   \sup_{x_T}    \sup_{p_T  } \Es{s_T, s'_T \sim p_T}{ \sup_{f \in \F} \sum_{t=1}^T \left( s'_t  -     s_t \right) f(x_t) }\\
& = \sup_{x_1}  \sup_{p_1  } \Ex_{s_1, s'_1 \sim p_1}\Ex_{\epsilon_1} \ldots   \sup_{x_T}    \sup_{p_T  } \Ex_{s_T, s'_T \sim p_T}\Es{\epsilon_T}{ \sup_{f \in \F} \sum_{t=1}^T \epsilon_t \left( s'_t  -     s_t \right) f(x_t) }\notag \\
& \le \sup_{x_1}  \sup_{s_1 \in [-2L,2L]} \Ex_{\epsilon_1} \ldots   \sup_{x_T}    \sup_{s_T \in [-2L,2L]} \Es{\epsilon_T}{ \sup_{f \in \F} \sum_{t=1}^T \epsilon_t  s_t   f(x_t) }\notag \\
& = \sup_{x_1}  \sup_{s_1 \in \{-2L,2L\}} \Ex_{\epsilon_1} \ldots   \sup_{x_T}    \sup_{s_T \in \{-2L,2L\}} \Es{\epsilon_T}{ \sup_{f \in \F} \sum_{t=1}^T \epsilon_t  s_t   f(x_t) }\\
& = 2 L \ \sup_{x_1}  \sup_{s_1 \in \{-1,1\}} \Ex_{\epsilon_1} \ldots   \sup_{x_T}    \sup_{s_T \in \{-1,1\}} \Es{\epsilon_T}{ \sup_{f \in \F} \sum_{t=1}^T \epsilon_t  s_t   f(x_t) }\label{eq:inter1app}
\end{align}
where the last inequality is because, for every $t \in [T]$, we have convexity in $s_t$ and so supremum is achieved at either $-2L$ or $2L$. Notice that after using convexity to go to gradients, the proof technique above basically mimics the proofs of Theorems \ref{thm:minimax} and \ref{thm:valrad} to get to a symmetrized term as we did in those theorems. Now consider any arbitrary function $\psi : \{\pm 1\} \mapsto \reals$, we have that 
$$
\sup_{s \in \{\pm 1\}} \Es{\epsilon}{\psi(s\cdot\epsilon)} = \sup_{s \in \{\pm 1\}} \frac{1}{2}\left(\psi(+s) + \psi(-s)\right) =  \frac{1}{2} \left(\psi(+1) + \psi(-1)\right) = \Es{\epsilon}{\psi(\epsilon)}
$$
Since in Eq.~\eqref{eq:inter1app}, for each $t$, $s_t$ and $\epsilon_t$ appear together as $\epsilon_t \cdot s_t$ using the above equation repeatedly, we conclude that 
\begin{align}
	\label{eq:sup_value_upper_1}
\Val^{S}_T(\F)  & \le 2 L \ \sup_{x_1}  \sup_{s_1 \in \{-1,1\}} \En_{\epsilon_1} \ldots   \sup_{x_T}    \sup_{s_T \in \{-1,1\}} \Es{\epsilon_T}{ \sup_{f \in \F} \sum_{t=1}^T \epsilon_t  s_t   f(x_t) } \notag\\
& = 2 L \ \sup_{x_1}  \En_{\epsilon_1} \ldots   \sup_{x_T}    \Es{\epsilon_T}{ \sup_{f \in \F} \sum_{t=1}^T \epsilon_t    f(x_t) }
\end{align}
We now claim that the above supremum can be written in terms of an $\X$-valued tree. Briefly, the solution for $x_1$ in \eqref{eq:sup_value_upper_1} is attained (for simplicity, assume the supremum is attained) at an optimal value $x_1^*$. The optimal value $x_2^*$ can be calculated for $\epsilon_1=1$ and $\epsilon_1=-1$. Arguing in this manner leads to a tree $\x$. We conclude 
\begin{align*}
\Val^{S}_T(\F)  &\leq 2 L\ \sup_{\x} \Es{\epsilon_{1:T}}{ \sup_{f \in \F} \sum_{t=1}^T \epsilon_t f(\x_t(\epsilon)) } = 2\, L\, T\ \Rad_T(\F)
\end{align*}

\end{proof}

%%%%%%%%%%%%%%%%%%%%%%%%%%%%%%%%%%%
\begin{proof}[\textbf{of Proposition~\ref{prop:uplow}}]
For the upper bound, we start by using Theorem~\ref{thm:valrad_supervised} for absolute loss, which has a Lipschitz constant of $1$,
to bound the value of the game by sequential Rademacher complexity,
\[
	\frac{1}{T} \Val^{\trm{S}}_T(\F) \le 2\, \Rad_T(\F) \ .
\]
We combine the above inequality with Eqs.~\eqref{eq:dudley} and~\eqref{eq:fromcoveringtofat} to obtain the upper bound.

Observe that a lower bound on the value can be obtained by choosing any particular joint distribution on sequences $(x_1,y_1),\ldots,(x_t,y_t)$ in Eq.~\eqref{eq:value_equality}:
\begin{align*}
	\Val^{\trm{S}}_T(\F) &\geq  \En \left[
	  \sum_{t=1}^T \inf_{f_t \in \F}
	  \Es{(x_t,y_t)}{ |y_t-f_t (x_t)| \ \Big|\  (x,y)_{1:t-1}} - \inf_{f\in \F} \sum_{t=1}^T |y_t-f(x_t)|
	\right] 
\end{align*}
To this end, choose any $\X$-valued tree $\x$ of depth $T$. Let $y_1,\ldots,y_T$ be i.i.d. Rademacher random variables and define $x_t = \x(y_{1:t-1})$ deterministically (that is, the conditional distribution of $x_t$ is a point distribution on $\x(y_{1:t-1})$). It is easy to see that this distribution makes the choice $f_t$ irrelevant, yielding
\begin{align*}
	\Val^{\trm{S}}_T(\F) &\geq \En \left[
	  \sum_{t=1}^T 1 - \inf_{f\in \F} \sum_{t=1}^T |y_t-f(x_t)|
	\right] 
	= \En_{y_1,\ldots,y_T} 
	   \sup_{f\in \F} \sum_{t=1}^T y_t f(x_t)
\end{align*}
Since this holds for any tree $\x$, we obtain the desired lower bound $\Val^{\trm{S}}_T(\F) \geq \Rad_T(\F)$. The final lower bound on $\Rad_T(\F)$ (in terms of the fat-shattering dimensions) is proved by \citet[Lemma 2]{RakSriTew13}.

\end{proof}

%%%%%%%%%%%%%%%%%%%%%%%%%%%%%%%%%%%
\begin{proof}[\textbf{of Theorem~\ref{thm:tight}}]	
	The equivalence of \emph{1} and \emph{2} follows directly from Proposition~\ref{prop:uplow}. First, suppose that $\fat_{\alpha}$ is infinite
	for some $\alpha > 0$. Then, the lower bound says that $\Val^\trm{S}_T(\F) \ge \alpha T/(4\sqrt{2})$ and hence
	$\limsup_{T\to\infty} \Val^\trm{S}_T(\F)/T \ge \alpha/(4\sqrt{2})$. Thus, the class $\F$ is not online learnable in the supervised setting. Now, assume that $\fat_\alpha$
	is finite for all $\alpha$. Fix an $\epsilon > 0$ and choose $\alpha = \epsilon/16$. Using the upper bound, we have
	\begin{align*}
	\Val^\trm{S}_T(\F) &\le 8T\alpha + 24\sqrt{T} \int_{\alpha}^1 \sqrt{ \fat_\beta \log\left(\frac{2 e T}{\beta}\right)}\ d \beta \\
	&\le 8T\alpha + 24\sqrt{T} (1-\alpha) \sqrt{ \fat_\alpha \log\left(\frac{2 e T}{\alpha}\right) } \\
	&\le \epsilon T/2 + \epsilon T/2
	\end{align*}
	for $T$ large enough. Thus, $\limsup_{T\to\infty} \Val^{\trm{S}}_T(\F)/T \le \epsilon$. Since $\epsilon > 0$ was arbitrary, this proves that
	$\F$ is online learnable in the supervised setting.

	The statement that  $\Val^{\trm{S}}_T(\F)$, $\Rad_T(\F)$, and $\Dudley_T(\F)$ are within a multiplicative factor of $\mc{O}(\log^{3/2} T)$ of each other whenever the problem is online learnable follows immediately from  \cite[Eq.~(10)]{RakSriTew13} and Proposition~\ref{prop:uplow}.

\end{proof}

%%%%%%%%%%%%%%%%%%%%%%%%%%%%%%%%%%%%%%%
\begin{proof}[\textbf{of Lemma~\ref{lem:equal_value}}]
Consider the game $(\F,\Zcvx)$ and fix a randomized strategy $\pi$ of the player.
Then, the expected regret of a randomized strategy $\pi$ against any adversary playing $g_1,\ldots,g_T$ can be lower-bounded via Jensen's inequality as
\begin{align*}
\sum_{t=1}^T \Es{u_t \sim \pi_t(g_{1:t-1})}{ g_t(u_t) } - \inf_{u \in \F} \sum_{t=1}^T g_t(u) \ge \sum_{t=1}^T g_t\left(\Es{u_t \sim \pi_t(g_{1:t-1})}{u_t}\right) - \inf_{u \in \F} \sum_{t=1}^T g_t(u),
\end{align*}
which is simply regret of a {\em deterministic} strategy obtained from $\pi$ by playing $\Es{u_t \sim \pi_t(g_{1:t-1})}{u_t}$ 
on round $t$.
Thus, to any randomized strategy corresponds a deterministic one that is no worse. On the other hand, the set of randomized strategies contains the set of deterministic ones. Hence, $\Val_T(\F,\Zcvx) = \Valdet_T(\F,\Zcvx)$ where $\Valdet_T$ is defined as the minimax regret obtainable only using deterministic player
strategies. Now, we appeal to Theorem 14 of \citet{abernethy08optimal} that says $\Valdet_T(\F,\Zcvx) = \Valdet_T(\F,\Zlin)$. Note that \citet{abernethy08optimal} deal with convex sets in finite dimensional spaces only. However, their proof relies on fundamental properties of convex functions that are true in any general vector space (such as the fact that the first order Taylor expansion of a convex function globally lower bounds the convex function). Since $\Zlin$ also consists
of convex (in fact, linear) functions, the above argument again gives $\Valdet_T(\F,\Zlin) = \Val_T(\F,\Zlin)$. This finishes the proof of the lemma.
\end{proof}

%%%%%%%%%%%%%%%%%%%%%%%%%%%%%%%%%%%%%%%
\begin{proof}[\textbf{of Proposition~\ref{prop:NN}}]
We shall prove that for any $i \in \{2,\ldots,k\}$,
$$
\Rad_T(\F_i) \le 16 L B_i \left(1+4\sqrt{2}\log^{3/2}(eT^2)\right) \Rad_T(\F_{i-1})
$$
%$$ \Rad_T (\phi\circ\G) \leq 8\,L\,\left(1+4\sqrt{2}\log^{3/2}(eT^2)\right) \cdot\Rad_T(\G)$$
To see this note that for any $\x$, $\Rad_T(\F_i,\x)$ is equal to
\begin{align*}
\Es{\epsilon}{\sup_{\underset{ \forall j~  f_j \in \F_{i-1}}{w^{i} : \|w^i\|_1 \le B_i}}\sum_{t=1}^T \epsilon_t \left(\sum_{j} w^i_j \sigma\left(f_j(\x_t(\epsilon))\right)\right) }
\le \Es{\epsilon}{\sup_{\underset{ \forall j~  f_j \in \F_{i-1}}{w^{i} : \|w^i\|_1 \le B_i}} \|w^i\|_1 \max_j\left|\sum_{t=1}^T \epsilon_t \sigma\left(f_j(\x_t(\epsilon))\right)\right|} 
\end{align*}
by H\"older's inequality. Then $\Rad_T(\F_i)$ is upper bounded as
\begin{align}
%&\sup_{\x} \Es{\epsilon}{ B_i \sup_{ f \in \F_{i-1}}\left|\sum_{t=1}^T \epsilon_t \sigma\left(f(\x_t(\epsilon))\right)\right|}\notag \\
&\sup_{\x} \Es{\epsilon}{ B_i \sup_{ f \in \F_{i-1}}\max\left\{\sum_{t=1}^T \epsilon_t \sigma\left(f(\x_t(\epsilon))\right), -\sum_{t=1}^T \epsilon_t \sigma\left(f(\x_t(\epsilon))\right)\right\}}\notag \\
&\leq \sup_{\x} \Es{\epsilon}{ B_i \max\left\{\sup_{ f \in \F_{i-1}}\sum_{t=1}^T \epsilon_t \sigma\left(f(\x_t(\epsilon))\right), \sup_{ f \in \F_{i-1}} \sum_{t=1}^T - \epsilon_t \sigma\left(f(\x_t(\epsilon))\right)\right\}}\notag \ .
\end{align}
Since $0 \in \F_i$ together with the assumption of $\sigma(0) =0$, both terms are non-negative, and thus the maximum above can be upper bounded by the sum
\begin{align}
& \sup_{\x} \Es{\epsilon}{ B_i \sup_{ f \in \F_{i-1}} \sum_{t=1}^T \epsilon_t \sigma\left(f(\x_t(\epsilon))\right)} + \sup_{\x} \Es{\epsilon}{ B_i \sup_{ f \in \F_{i-1}} \sum_{t=1}^T - \epsilon_t \sigma\left(f(\x_t(\epsilon))\right)} \notag \ .
\end{align}
We now claim that the two terms are equal. Indeed, let $\x^*$ be the tree achieving the supremum in the first term (a modified analysis can be carried out if the supremum is not achieved). Then the mirror tree $\x$ defined via $\x_t(\epsilon)=\x^*_t(-\epsilon)$ yields the same value for the second term. Since the argument can be carried out in the reverse direction, the two terms are equal, and the upper bound of
\begin{align}
&2 B_i  \sup_{\x} \Es{\epsilon}{  \sup_{ f \in \F_{i-1}}\sum_{t=1}^T \epsilon_t \sigma\left(f(\x_t(\epsilon))\right)} \notag
\end{align}
follows. In view of contraction in Corollary~\ref{cor:contraction}, we obtain a further upper bound of
\begin{align}
	&16 B_i L \left(1+4\sqrt{2}\log^{3/2}(eT^2)\right) \Rad_T(\F_{i-1}) \label{eq:rec}
\end{align}
To finish the proof we note that for the base case of $i=1$, $\Rad_T(\F_1)$ is equal to
\begin{align*}
\sup_{\x} \Es{\epsilon}{  \sup_{ w \in \reals^d : \|w\|_1 \le B_1}\sum_{t=1}^T \epsilon_t w^\top \x_t(\epsilon)} 
\end{align*}
which is upper bounded by
\begin{align*}
\sup_{\x} \Es{\epsilon}{  \sup_{ w \in \reals^d : \|w\|_1 \le B_1}\|w\|_1 \left\|\sum_{t=1}^T \epsilon_t \x_t(\epsilon)\right\|_\infty} \leq B_1 \sup_{\x} \Es{\epsilon}{ \max_{i\in [d]}\left\{\sum_{t=1}^T \epsilon_t \x_t(\epsilon)[i] \right\}}
\end{align*}
Note that the instances $x \in \X$ are vectors in $\reals^d$ and so for a given instance tree $\x$, for any $i \in [d]$, $\x[i]$ given by only taking the $i^{th}$ co-ordinate is a valid real valued tree. By Eq.~\eqref{eq:fin},
\begin{align*}
T \cdot \Rad_T(\F_1) & \le B_1 \sup_{\x} \Es{\epsilon}{ \max_{i\in [d]}\left\{\sum_{t=1}^T \epsilon_t \x_t(\epsilon)[i]\right\}}  \le B_1 \sqrt{2 T X_\infty^2 \log d}
\end{align*}
Using the above and Eq.~\eqref{eq:rec} repeatedly we conclude the proof.
\end{proof}

%%%%%%%%%%%%%%%%%%%%%%%%%%%%%%%%%%%%%%%
\begin{proof}[\textbf{of Proposition~\ref{prop:margin}}]
Fix a $\gamma > 0$ and use loss 
$$
\ell(\hat{y},y) = \left\{\begin{array}{ll}
1 &  \hat{y} y \le 0\\
1-\hat{y}y/\gamma & 0 < \hat{y} y < \gamma \\
0 & \hat{y} y \ge \gamma
\end{array}\right.
$$
Since this loss is $1/\gamma$-Lipschitz, we can use \eqref{eq:sup_value_upper_bound_no_contraction} and the Rademacher contraction Corollary~\ref{cor:contraction} to show that for each $\gamma > 0$ there exists a randomized strategy $\tau^\gamma$ such that for any data sequence
$$
\sum_{t=1}^T  \Es{\hat{y}_t \sim \tau^\gamma_t(z_{1:t-1})}{\ell(\hat{y}_t, y_t)}  \le \inf_{f \in \F} \sum_{t=1}^T \ell(f(x_t), y_t) +  \gamma^{-1} \rho_T T\Rad_T(\F), 
$$
where $\rho_T = 16\left(1+4\sqrt{2}\log^{3/2}(eT^2)\right)$ throughout the proof. Further, observe that the loss function is lower bounded by the zero-one loss $\ind{\hat{y} y <0}$ and is upper bounded by the margin zero-one loss $\ind{\hat{y} y <\gamma}$. Hence, 
\begin{align} \label{eq:gamreg}
\sum_{t=1}^T  \Es{\hat{y}_t \sim \tau^\gamma_t(z_{1:t-1})}{\ind{\hat{y}_t y_t < 0}}  \le \inf_{f \in \F} \sum_{t=1}^T \ind{y_tf(x_t) < \gamma} +  \gamma^{-1}  \rho_T T\Rad_T(\F) 
\end{align}

The above bound holds for randomized each strategy given by $\tau^\gamma$, for any given $\gamma$. Now we discretize the set of  $\gamma$'s as $\gamma_i = 1 /2^{i}$ and use the output of the randomized strategies $\tau^{\gamma_1}, \tau^{\gamma_2}, \ldots$, that attain the regret bounds given in \eqref{eq:gamreg}, as experts. We then run a countable experts algorithm (Algorithm~\ref{alg:experts}) with initial weight for expert $i$ as $p_i = \frac{6}{\pi^2 i^2}$. Such an algorithm achieves $\mathcal{O}(\sqrt{T} \log (1/p_i))$ regret w.r.t. expert $i$. In view of Proposition \ref{prop:experts}, for this randomized strategy $\tau$, for any $i$
\begin{align*}
\sum_{t=1}^T  \Es{\hat{y}_t \sim \tau_t(z_{1:t-1})}{\ind{\hat{y}_t y_t < 0}}  \le \inf_{f \in \F} \sum_{t=1}^T \ind{y_tf(x_t) < \gamma_i} +  \gamma_i^{-1} \rho_T T\Rad_T(\F)  +\sqrt{T}\left(1 + 2 \log\left(\frac{i \pi}{\sqrt{6}}\right)\right)
\end{align*}
For any $\gamma > 0$, let $i_\gamma\in 0,1,\ldots,$ be such that $2^{-(i_\gamma+1)}< \gamma \le 2^{-i_\gamma}$. Then above right-hand side is upper bounded by
\begin{align*}
\inf_{f \in \F} \sum_{t=1}^T \ind{y_tf(x_t) < 2 \gamma} 
+  \gamma^{-1}\rho_T T\Rad_T(\F)  + \sqrt{T}\left(1 + 2 \log\left(\frac{i_\gamma \pi}{\sqrt{6}}\right)\right)
\end{align*}
The proof is concluded using the inequality $i_\gamma \le \log(1/\gamma)$ and upper bounding constants.
% \begin{align*}
% \inf_{f \in \F} \sum_{t=1}^T \ind{f(x_t) y_t < 2 \gamma} +  \frac{16}{\gamma} \rho_T T\Rad_T(\F)  + \sqrt{T}\left(1 + 2 \log\left(\frac{ \pi \log(1/\gamma)}{\sqrt{6}}\right)\right)
% \end{align*}
\end{proof}

%%%%%%%%%%%%%%%%%%%%%%%%%%%%%%%%%%%%%%
\begin{proof}[\textbf{of Proposition~\ref{prop:DT}}]
Fix some $L > 0$. The loss 
$$
\phi_L(\alpha) = \left\{\begin{array}{cl}
1 & \textrm{if }\alpha \le 0\\
1 - L\alpha & \textrm{if }0 < \alpha \le 1/L\\
0 & \textrm{otherwise}
\end{array} \right.
$$ 
is $L$-Lipschitz and so by Theorem \ref{thm:valrad} and Corollary~\ref{cor:contraction} we have that for every $L > 0$, there exists a randomized strategy $\tau^L$ for the player, such that for any sequence $z_1 = (x_1,y_1), \ldots, z_T = (x_T,y_T)$,
\begin{align}
	\label{eq:dt_1}
\sum_{t=1}^T \Es{\hat{y}_t \sim \tau^L_t(z_{1:t-1})}{\phi_L(y_t\hat{y}_t)} \le \inf_{f \in \F} \sum_{t=1}^T \phi_L(y_t f(x_t)) + L \rho_T T\Rad_T(\F)
\end{align}
where $\rho_T = 16\left(1+4\sqrt{2}\log^{3/2}(eT^2)\right)$ throughout this proof. Since $\phi_L$ dominates the step function, the left hand side of \eqref{eq:dt_1} also upper-bounds the expected indicator loss
$$
\sum_{t=1}^T \Es{\hat{y}_t \sim \tau^L_t(z_{1:t-1})}{\ind{\hat{y}_t \ne y_t}}. 
%\le \inf_{t \in \mc{T}} \sum_{t=1}^T \phi_L(y_t t(x_t)) + 2 L \rho_T \Rad(\mc{T})
$$
%Let $f^* \in \F$ be the minimizer of $\sum_{t=1}^T \ind{y_tf(x_t) \leq 0}$. Then
For any $f\in\F$, we can relate the $\phi_L$-loss to the indicator loss by
\begin{align*}
\sum_{t=1}^T \phi_L(y_t f(x_t))  &= \sum_{t=1}^T \ind{y_t f(x_t) \leq 0} + \sum_l C(l) \phi_L(w_l). 
\end{align*}
Let us now use the above decomposition in Eq.~\eqref{eq:dt_1}. Crucially, the sign of $f(x)$ does not depend on $w_l$, but only on the label $\sigma_l$ of the unique leaf $l$ reached by $x$. Thus, the infimum in \eqref{eq:dt_1} can be split into two infima: 
$$\inf_{f \in \F} \sum_{t=1}^T \phi_L(y_t f(x_t)) = \inf_{f\in\F} \sum_{t=1}^T \ind{y_t f(x_t) \leq 0} + \inf_{w_l} \sum_l C(l) \phi_L(w_l) $$
where it is understood that the $C(l)$ term on the right hand side is computed using the function $f$ minimizing the first sum on the right hand side.
We can further write
$$ \sum_l C(l) \phi_L(w_l) \le \sum_l C(l) \max(0, 1 - Lw_l) = \sum_l  \max\left(0, (1 - Lw_l)C(l) \right).$$
% Hence we see that
% \begin{align*}
% \sum_{t=1}^T \Es{f_t \sim \pi^L_t(z_{1:t-1})}{\ind{f_t(x_t) \ne y_t}} \le \inf_{t \in \mc{T}} \sum_{t=1}^T \ind{t(x_t) \ne y_t} &+ \sum_l  \max\left(0, (1 - Lw_l)C(l) \right) + 2 L \rho_T \Rad(\mc{T})
% \end{align*}
So far, we have derived a regret bound for a given $L$. Let us now remove the requirement to know $L$ a priori by running the experts Algorithm \ref{alg:experts} with $\tau^{1}, \tau^{2}, \ldots$ as a countable set of experts corresponding to the values $L \in \mbb{N}$. The prior on expert $L$ is taken to be $p_L = \frac{6}{\pi^2}L^{-2}$  so that $\sum p_L = 1$. For the randomized strategy $\tau$ obtained in this manner, from Proposition \ref{prop:experts}, for any sequence of instances and any $L \in \mbb{N}$, 
\begin{align*}
\sum_{t=1}^T \Es{\hat{y}_t \sim \tau_t(z_{1:t-1})}{\ind{\hat{y} \ne y_t}} &\le \inf_{f \in \F} \sum_{t=1}^T \ind{y_tf(x_t) \leq 0} + \inf_{f\in\F} \sum_l  \max\left(0, (1 - Lw_l)C(l) \right) \\
&+ L \rho_T T \Rad_T(\F)  + \sqrt{T} + 2 \sqrt{T} \log(L \pi / \sqrt{6})
\end{align*}
Now we pick $L = \left|\{l : C(l) > \rho_T T \Rad_T(\F)\}\right| \leq N$ and upper bound the second infimum by choosing $w_l = 0$ if $C(l) \le \rho_T T \Rad_T(\F)$ and $w_l = 1/L$ otherwise:
\begin{align*}
	\inf_{w_l}\sum_l  \max\left(0, (1 - Lw_l)C(l) \right) + L \rho_T T \Rad_T(\F) &\leq  \sum_l  C(l) \ind{C(l) \le  \rho_T T \Rad_T(\F)} \\
	&+ \rho_T T\Rad_T(\F) \sum_l \ind{C(l) > \rho_T T \Rad_T(\F)}
\end{align*}
which can be written succinctly as
$$\sum_l  \min\{C(l), \rho_T T \Rad_T(\F)\} $$
We conclude that
\begin{align*}
\sum_{t=1}^T \Es{\hat{y}_t \sim \tau_t(z_{1:t-1})}{\ind{\hat{y}_t \neq y_t}} &\leq \inf_{f\in\F} \sum_{t=1}^T \ind{y_t f(x_t)\leq 0} \\
&+ \sum_l \min(C(l), \rho_T T\Rad_T(\F)) + \sqrt{T}\left(1 + 2   \log(N \pi / \sqrt{6})\right)
\end{align*}
Finally, we apply Corollary \ref{cor:radem_binary} and Lemma~\ref{lem:rad_properties}(2) to bound $\Rad_T(\F) \le d \mc{O}(\log^{3/2}T)\  \Rad_T(\mc{H})$ and thus conclude the proof.
\end{proof}

\begin{proof}[{\bf of Proposition~\ref{prop:isotron}}]
First, by the classical result of \citet{kolmogorov1959}, the class $\G$ of all bounded Lipschitz functions on a bounded interval has small metric entropy: $\log\Nhat_\infty(\alpha, \G) = \Theta(1/\alpha)$. For the particular class of non-decreasing $1$-Lipschitz functions, it is trivial to verify that the entropy is in fact bounded by $2/\alpha$. Considering all $1$-Lipschitz functions increases this to $c_0/\alpha$ for some universal constant $c_0$.

Next, consider the class $\F = \{ \inner{w, x} \ | \ \|w\|_2\leq 1 \}$ over the Euclidean ball. By Proposition~\ref{prop:rad_linear_functions}, $\Rad_T(\F) \leq 1/\sqrt{T}$. Using the lower bound of Proposition~\ref{prop:uplow}, $\fat_\alpha \leq 32/\alpha^2$ whenever $\alpha > 4\sqrt{2}/\sqrt{T}$. This implies that $\N_\infty(\alpha,\F,T)\leq (2eT/\alpha)^{32/\alpha^2}$ whenever $\alpha > 4\sqrt{2}/\sqrt{T}$. Note that this bound does not depend on the ambient dimension of $\X$.

Next, we show that a composition of $\G$ with any ``small'' class $\F\subset [-1,1]^\X$ also has a small cover. To this end, suppose $\N_\infty (\alpha, \F, T)$ is the covering number for $\F$. Fix a particular tree $\x$ and let $V=\{\v_1,\ldots, \v_N\}$ be an $\ell_\infty$ cover of $\F$ on $\x$ at scale $\alpha$. Analogously, let $W=\{g_1,\ldots,g_M\}$ be an $\ell_\infty$ cover of $\G$ with $M = \Nhat_\infty(\alpha, \G)$. Consider the class $\G\circ \F = \{g\circ f: g\in \G, f\in \F\}$. The claim is that $\{g(\v): \v\in V, g\in W\}$ provides an $\ell_\infty$ cover for $\G\circ\F$ on $\x$. Fix any $f\in\F, g\in \G$ and $\epsilon\in\{\pm1\}^T$. Let $\v\in V$ be such that $\max_{t\in[T]} |f(\x_t(\epsilon))-\v_t(\epsilon)| \leq \alpha$, and let $g'\in W$ be such that $\|g-g'\|_\infty\leq\alpha$. Then, using the fact that functions in $\G$ are $1$-Lipschitz, for any $t\in [T]$, 
$$|g(f(\x_t(\epsilon))) - g'(\v_t(\epsilon))| \leq |g(f(\x_t(\epsilon))) - g'(f(\x_t(\epsilon))| + |g'(f(\x_t(\epsilon)) - g'(\v_t(\epsilon))| \leq 2\alpha \ .$$
Hence, $\N_\infty(2\alpha, \G\circ\F, T) \leq   \Nhat_\infty (\alpha, \G) \times \N_\infty (\alpha, \F, T)$. 

Finally, we put all the pieces together. By Theorem~\ref{thm:valrad_supervised}, the minimax value is bounded by $8T$ times the sequential Rademacher complexity of the class
$\G\circ\F = \{ u(\inner{w, x}) \ | \ u:[-1,1]\to [-1,1] \mbox{ is $1$-Lipschitz }, \ \|w\|_2\leq 1 \} $
since the squared loss is $4$-Lipschitz on the space of possible values. The latter complexity is then bounded by
\begin{align*}
	T \Dudley_T(\G\circ\F) &\leq 32\sqrt{T} + 12\int_{8/\sqrt{T}}^{1} \sqrt{T \ \log \ \N(\delta, \G\circ\F, T) \ } d \delta \\
	&\leq 32\sqrt{T}+ 12\sqrt{T}\int_{8/\sqrt{T}}^1 \sqrt{\frac{4c_0}{\delta} + \frac{128}{\delta^2}\log(2eT)} d\delta \ .
\end{align*}
We therefore conclude that the value of the game for the supervised learning problem is bounded by $\mc{O}(\sqrt{T}\log^{3/2} (T))$.
\end{proof}

\section{Exponentially Weighted Average (EWA) Algorithm on Countable Experts}
We consider here a version of the exponentially weighted experts algorithm for a countable (possibly infinite) number of experts and provide a bound on the expected regret of the randomized algorithm. The proof of the result closely follows the finite case (e.g. \cite[Theorem 2.2]{PLG}). This result is well known and we include it here for completeness, as it is needed in the proofs of Proposition~\ref{prop:margin} and Proposition~\ref{prop:DT}. 

Suppose we are provided with countable experts $E_1, E_2 , \ldots$, where each expert can herself be thought of as a randomized/deterministic player strategy which, given history, produces an element of $\F$ at round $t$. Here we also assume that $\F \subseteq [0,1]^\X$. Denote by $f^i_t$ the function output by expert $i$ at round $t$ given the history. The EWA algorithm we consider needs access to the countable set of experts and also needs an initial weighting on each expert $p_1,p_2,\ldots$ such that $\sum_i p_i =1$. 

\begin{algorithm}
\caption{EWA ($E_1,E_2,\ldots$, $p_1,p_2,\ldots$)}
\label{alg:experts}
\begin{algorithmic}
\STATE Initialize each $w^1_i \gets p_i$
\FOR{$t=1$ to $T$}
\STATE Pick randomly an expert $i$ with probability $w^t_i$
\STATE Play $f_t = f^t_i$
\STATE Receive $x_t$
\STATE Update for each $i$, $w^{t+1}_i = \frac{w^t_i e^{- \eta f^t_i(x_t)}}{\sum_{i} w^t_i e^{- \eta f^t_i(x_t)}}$
\ENDFOR 
\end{algorithmic}
\end{algorithm}

\begin{proposition}
	\label{prop:experts}
The exponentially weighted average forecaster (Algorithm~\ref{alg:experts}) with $\eta = T^{-1/2}$ enjoys the regret bound
$$
\sum_{t=1}^T \E{f_t(x_t)} \le \sum_{t=1}^T f^t_i(x_t) + \frac{\sqrt{T}}{8} + \sqrt{T} \log\left( 1/p_i\right)
$$
for any $i \in \mathbb{N}$.
\end{proposition}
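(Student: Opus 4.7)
The plan is to follow the standard potential-function analysis of the exponentially weighted forecaster, adapted to countably many experts via the prior $(p_i)$. The extension from the finite case is essentially painless, since all weighted sums are absolutely convergent under the assumption that $\sum_i p_i = 1$ and the losses $f^t_i(x_t) \in [0,1]$ are bounded.

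Concretely, I would introduce the unnormalized cumulative weights
\[
 \tilde w^t_i = p_i \exp\!\left(-\eta \sum_{s=1}^{t-1} f^s_i(x_s)\right), \qquad W_t = \sum_{i\in\mathbb N} \tilde w^t_i,
\]
so that $w^t_i = \tilde w^t_i / W_t$ matches the update rule in Algorithm~\ref{alg:experts}, and $W_1 = \sum_i p_i = 1$. For any fixed expert $i$, since $\tilde w^{T+1}_i \leq W_{T+1}$, we immediately obtain the lower bound
\[
 \log W_{T+1} \;\geq\; \log p_i - \eta\sum_{t=1}^T f^t_i(x_t).
\]

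For the upper bound, I would write $W_{t+1}/W_t = \sum_i w^t_i e^{-\eta f^t_i(x_t)}$, recognize this as $\En_{I\sim w^t}[e^{-\eta f^t_I(x_t)}]$, and apply Hoeffding's lemma (using $f^t_i(x_t)\in[0,1]$) to get
\[
\log \frac{W_{t+1}}{W_t} \;\leq\; -\eta\, \En_{I\sim w^t}[f^t_I(x_t)] \,+\, \frac{\eta^2}{8} \;=\; -\eta\, \En_t[f_t(x_t)] \,+\, \frac{\eta^2}{8},
\]
where $\En_t[\cdot]$ denotes the conditional expectation given the history at round $t$ (this is where the randomization of the algorithm enters). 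Telescoping over $t=1,\ldots,T$ yields $\log W_{T+1} \leq -\eta \sum_t \En_t[f_t(x_t)] + T\eta^2/8$.

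Combining the two bounds on $\log W_{T+1}$ and rearranging gives
\[
 \sum_{t=1}^T \En_t[f_t(x_t)] \;\leq\; \sum_{t=1}^T f^t_i(x_t) + \frac{T\eta}{8} + \frac{\log(1/p_i)}{\eta}.
\]
Taking outer expectations (the tower property collapses the $\En_t$'s to a single $\En$ over all algorithm randomness) and plugging in $\eta = T^{-1/2}$ produces the claimed bound. The only mildly non-routine step is ensuring that passing from finite to countable index set is harmless; this is immediate from monotone convergence, since $\sum_i \tilde w^t_i$ is bounded by $\sum_i p_i = 1$ at every round, so all sums and the interchange of summation with $\log$/exp manipulations are well defined. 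No single step is a real obstacle — Hoeffding's lemma is the only inequality doing work.
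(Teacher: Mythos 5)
Your proposal is correct and follows essentially the same route as the paper: the potential $W_t = \sum_i p_i e^{-\eta\sum_s f^s_i(x_s)}$, the Hoeffding-lemma bound on $\log(W_{t+1}/W_t)$, telescoping against the lower bound $\log W_{T+1} \ge \log p_i - \eta\sum_t f^t_i(x_t)$, and the choice $\eta = T^{-1/2}$. The only difference is a cosmetic index shift in defining $W_t$, and your remark about justifying the countable sums via boundedness of the unnormalized weights, which the paper leaves implicit.
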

%The proof is straightforward, and is omitted (see \citep{PLG}).
% \begin{proof}
% Define $W_t = \sum_{i} p_i e^{- \eta \sum_{j=1}^t f^j_i(x_t)}$. Then note that
% $$
% \log\left(\frac{W_t}{W_{t-1}}\right) = \log\left(\frac{\sum_{i} p_i e^{- \eta \sum_{j=1}^t f^j_i(x_t)}}{W_{t-1}}\right) = \log\left(\sum_i w^{t}_i e^{- \eta f_i^t (x_t)}\right)
% $$
% Now using Hoeffding's inequality \citep[Lemma 2.2]{PLG} we have that 
% $$
% \log\left(\frac{W_t}{W_{t-1}}\right) \le - \eta \sum_i  w^{t}_i f^t_i(x_t) + \frac{\eta^2}{8} = -\eta \E{f_t(x_t)} + \frac{\eta^2}{8}
% $$
% Summing over $t$ we get
% \begin{align}\label{eq:sumW}
% \log(W_T) - \log(W_0) = \sum_{t=1}^T \log\left(\frac{W_t}{W_{t-1}}\right) \le -\eta \sum_{t=1}^T \E{f_t(x_t)} + \frac{T \eta^2}{8}
% \end{align}
% Note that $W_0 = \sum_{i} p_i = 1$ and so $\log(W_0) = 0$.
% Also note that for any $i \in \mbb{N}$, 
% $$
% \log(W_T) = \log\left(\sum_i p_i e^{-\eta \sum_{t=1}^T f^t_i(x_t)} \right) \ge \log\left(p_i e^{-\eta \sum_{t=1}^T f^t_i(x_t)} \right) = \log(p_i)  - \eta \sum_{t=1}^T f^t_i(x_t)
% $$
% Hence using this with Eq.~\eqref{eq:sumW} we see that
% $$
% \log(p_i)  - \eta \sum_{t=1}^T f^t_i(x_t) \le -\eta \sum_{t=1}^T \E{f_t(x_t)} + \frac{T \eta^2}{8}
% $$
% Rearranging we get
% \begin{align*}
% \sum_{t=1}^T \E{f_t(x_t)} \le \sum_{t=1}^T f^t_i(x_t) + \frac{\eta T}{8}+ \frac{1}{\eta} \log\left(\frac{1}{p_i}\right)
% \end{align*}
% where the expectation is under the random choice of the algorithm. Using $\eta = \frac{1}{\sqrt{T}}$ we get the desired bound.
% \end{proof}

%\bibliographystyle{plainnat}
\bibliography{paper}

\end{document}